\newtheorem{assumption}{Assumption}
\newtheorem{remark}{Remark}
\newtheorem{proposition}{Proposition}
\begin{document}

\title{DiffOG: Differentiable Policy Trajectory Optimization with Generalizability}

\author{Zhengtong Xu, Zichen Miao, Qiang Qiu, Zhe Zhang, Yu She$^{*}$

\thanks{$^{*}$Address all correspondence to this author.}
\thanks{Zhengtong, Zichen, Qiang, Zhe, and Yu are with Purdue University, West Lafayette, USA  
{\tt\footnotesize xu1703, miaoz, qqiu, zhan5111, shey@purdue.edu}}%
\thanks{This work was supported in part by the National Science Foundation (NSF) under Awards 2322056, 2423068, and 2520136, and in part by the United States
Department of Agriculture (USDA) under Awards 2023-67021-39072 and 2024-67021-42878. }
}

\let\oldtwocolumn\twocolumn
\renewcommand\twocolumn[1][]{%
    \oldtwocolumn[{#1}{
    \begin{center}
    \includegraphics[trim=20 0 0 0,clip,width=1.0\textwidth]{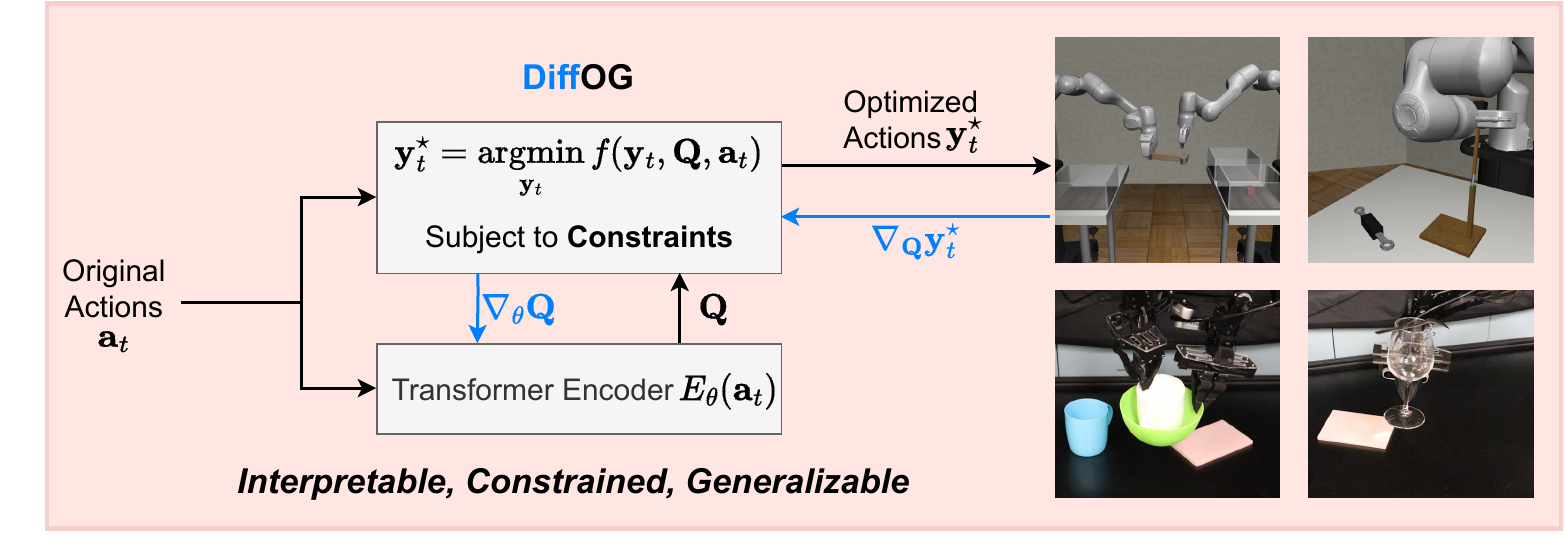}
           \captionof{figure}{We introduce differentiable policy trajectory optimization with generalizability (DiffOG). Visuomotor policies enhanced by DiffOG generate smoother, constraint-compliant action trajectories in a more interpretable way. DiffOG introduces a novel transformer-based differentiable trajectory optimization framework tailored for action refinement in imitation learning. Leveraging the differentiability of the optimization layer and the high capacity of the transformer, DiffOG can be trained on demonstration data to adapt to the diverse characteristics of trajectories across different tasks. We evaluate DiffOG across 13 tasks and showcase four representative ones here. These selected tasks present several key challenges, including long-horizon dual-arm manipulation, high-precision control, and smooth, constraint-satisfying trajectory generation.}
           \label{fig:firstPage}
        \end{center}
    }]
}

\maketitle

\begin{abstract}
Imitation learning-based visuomotor policies excel at manipulation tasks but often produce suboptimal action trajectories compared to model-based methods. Directly mapping camera data to actions via neural networks can result in jerky motions and difficulties in meeting critical constraints, compromising safety and robustness in real-world deployment. For tasks that require high robustness or strict adherence to constraints, ensuring trajectory quality is crucial. However, the lack of interpretability in neural networks makes it challenging to generate constraint-compliant actions in a controlled manner. This paper introduces differentiable policy trajectory optimization with generalizability (DiffOG), a learning-based trajectory optimization framework designed to enhance visuomotor policies. By leveraging the proposed differentiable formulation of trajectory optimization with transformer, DiffOG seamlessly integrates policies with a generalizable optimization layer. DiffOG refines action trajectories to be smoother and more constraint-compliant while maintaining alignment with the original demonstration distribution, thus avoiding degradation in policy performance. We evaluated DiffOG across 11 simulated tasks and 2 real-world tasks. The results demonstrate that DiffOG significantly enhances the trajectory quality of visuomotor policies while having minimal impact on policy performance, outperforming trajectory processing baselines such as greedy constraint clipping and penalty-based trajectory optimization. Furthermore, DiffOG achieves superior performance compared to existing constrained visuomotor policy. For more details, please visit the project website: https://zhengtongxu.github.io/diffog-website/.
\end{abstract}
\begin{IEEEkeywords}
Differentiable optimization, imitation learning, robot learning.
\end{IEEEkeywords}
\IEEEpeerreviewmaketitle

\section{Introduction}

Imitation learning \cite{pomerleau1988alvinn} has emerged as a popular paradigm for endowing robots with complex manipulation skills by leveraging human demonstrations. By formulating a supervised learning problem that maps sensor observations directly to actions via neural networks, imitation learning-based visuomotor policies have demonstrated effectiveness across a broad spectrum of tasks \cite{chi2023diffusionpolicy,chi2024universal,zhao2023learning,fu2024mobile,shafiullah2023bringing}. Yet, these learned policies often exhibit notable limitations when applied to real-world settings, where safety, robustness, and strict constraint satisfaction are paramount. In contrast to classical model-based trajectory optimization methods which explicitly account for motion constraints and can ensure smooth and reliable trajectories \cite{schulman2014motion}, imitation learning approaches may produce suboptimal or jerky actions. Such trajectories can reduce the robustness of the whole robotic systems and raise critical safety concerns.

Meanwhile, various challenges in imitation learning, such as the representation of multi-modal action distributions \cite{chi2023diffusionpolicy,lee2024behavior} and training large, generalized policies on extensive datasets \cite{octo_2023,o2023open,black2024pi_0}, have been the focus of ongoing research. Despite these advancements, ensuring the generation of high-quality and constraint-compliant trajectories for neural network-based policies remains a significant and open challenge. Unlike traditional model-based motion generation methods, learning-based approaches tend to be less controlled and more difficult to interpret. For instance, learning-based policies lack mechanisms to directly control trajectory smoothness or impose motion constraints through tunable parameters.

Instead, current approaches to improve trajectory quality are mainly based on post-hoc processing, such as applying trajectory optimization methods to refine the outputs of the learned policy. However, because constraints or smoothness objectives are not inherently considered during the policy training process, these adjustments can result in trajectories that deviate from those in the demonstration dataset. This misalignment can ultimately degrade the policy's performance. This often makes such post-hoc processing methods difficult to generalize across a wide range of tasks, particularly in scenarios involving high-dimensional action spaces, such as dual-arm manipulation, or long-horizon tasks with complex and highly varied trajectories.

In this paper, we introduce differentiable policy trajectory optimization with generalizability (DiffOG), a generalizable framework that seamlessly integrates transformer-based trajectory optimization with imitation learning to produce smooth, constraint-compliant action trajectories for robotic manipulation, as shown in Fig.~\ref{fig:firstPage}.

The core contributions of DiffOG can be summarized as follows:

1.  We propose a differentiable trajectory optimization framework designed to refine the actions generated by robot policies. The optimized trajectories are capable of accomplishing the demonstrated tasks while satisfying hard constraints and exhibiting improved smoothness. Built upon a supervised learning paradigm, this framework offers high flexibility and generalizes well across diverse tasks and policy architectures. Compared to traditional approaches that enforce constraints through post-hoc processing, which often lead to actions deviating from the demonstration distribution, our method improves trajectory quality while maintaining alignment with the demonstrated behavior, thereby better preserving policy performance.

2. We integrate a transformer-based trajectory encoder into the proposed differentiable trajectory optimization framework, which substantially enhances its representational capacity. This design allows DiffOG to adapt trajectory optimization to the specific characteristics of diverse tasks, including long-horizon dual-arm manipulation, with strong generalization ability. Furthermore, through a series of deliberate design choices and theoretical analyses, our transformer-based differentiable optimization layer aligns with interpretable theoretical principles while ensuring feasibility and stable training. By combining model-based formulations with neural networks, this approach fully exploits differentiability to improve both the effectiveness and flexibility of DiffOG.

3. Through rigorous analysis, we establish the interpretability of DiffOG under this training paradigm and elucidate the underlying principles that guide its operation. Extensive validation in both simulation and real-world settings demonstrates the broad effectiveness of DiffOG, as well as the flexibility and generalization capabilities of its transformer-based trajectory optimization. Overall, DiffOG provides a practical and scalable example of integrating visual policies with a rigorous and interpretable optimization layer.

 We evaluated DiffOG across 11 simulated tasks and 2 real-world tasks. The results demonstrate that DiffOG significantly enhances the trajectory quality of visuomotor policies with generalizability while having minimal impact on policy performance, outperforming trajectory processing baselines such as constraint clipping and penalty-based trajectory optimization. Furthermore, DiffOG achieves superior performance compared to existing constrained visuomotor policy \cite{xu2024leto}.

\section{Related Work}

\subsection{Supervised Policy Learning}

Supervised policy learning entails training a policy using supervised learning techniques on a dataset of pre-collected demonstrations \cite{pomerleau1988alvinn}. Recent advancements in machine learning algorithms and data collection frameworks have substantially advanced the capabilities and effectiveness of supervised policy learning in manipulation applications. For example, various data collection frameworks and devices have been proposed in \cite{chi2024universal, zhao2023learning, fu2024mobile, shafiullah2023bringing, seo2024legato, wang2024dexcap}, enabling efficient data collection and the training of effective manipulation policies. To better capture the multi-modal action distributions present in demonstration data, prior work have explored various techniques, such as using diffusion models \cite{chi2023diffusionpolicy}, VQ-VAE \cite{lee2024behavior}, and energy-based models \cite{florence2022implicit}. Furthermore, various effective action representations for supervised policy learning have been proposed, such as flow by point tracking \cite{xu2024flow,wen2023any} and equivariant action representation \cite{wangequivariant}. Moreover,  {many work} have contributed to training more general and  {scalable policies} on large-scale datasets \cite{octo_2023,o2023open,black2024pi_0,liu2024rdt}.

However, the aforementioned methods largely overlook the importance of incorporating constraints into the actions generated by policy outputs. Despite these advancements, ensuring the generation of high-quality and constraint-compliant trajectories for neural network-based policies remains a significant and open challenge. DiffOG addresses this issue by applying transformer-based differentiable trajectory optimization, refining its generated actions to improve smoothness, adherence to hard constraints, and fidelity to the demonstration.

\subsection{Policies with Differentiable Optimization}

The integration of differentiable optimization into policy learning has garnered significant attention due to its potential to combine model-based formulations with the representational power of neural networks. Previous studies have investigated the integration of model-based structures with policies through differentiable optimization, including differentiable model predictive control (MPC) \cite{amos2018differentiable} and the Koopman operator framework \cite{retchin2023koopman}. However, these methods are predominantly designed for low-dimensional observations, which limits their flexibility and applicability to the general formulation of visuomotor policies. The work in \cite{xu2024letac} presents a tactile-reactive grasping controller that integrates an image encoder with differentiable MPC. Although effective for specific tactile-based grasping tasks, this approach lacks the formulation required for broader policy learning frameworks. In the domain of obstacle avoidance, an end-to-end learning framework incorporating differentiable optimization is proposed in \cite{xiao2021barriernet}. However, this approach is specifically designed for navigation tasks and lacks applicability to robotic manipulation scenarios and visuomotor policies.

Riemannian motion policies \cite{ratliff2018Riemannian} and their extensions \cite{li2021rmp2, cheng2020rmp,rana2021towards} advance the generation of motions in high-dimensional spaces with complex, nonlinear dynamics into end-to-end robot learning by leveraging automatic differentiation and Riemannian geometry. However, in the context of policy learning, Riemannian motion policies are primarily designed for tasks involving low-dimensional observations, such as those demonstrated through kinesthetic teaching \cite{rana2021towards}. In contrast, DiffOG is tailored for a broader range of visuomotor policies, refining policies that utilize camera observations as input. DiffOG focuses on generating task-completing trajectories while enhancing smoothness and ensuring compliance with hard constraints.

DiffTORI \cite{wan2024difftori} integrates differentiable trajectory optimization into the policy representation to generate actions, with a focus on capturing multi-modal action distributions in the context of imitation learning. However, DiffTORI lacks explicit and rigorous formulations for constraints and smoothness, limiting its ability to optimize trajectory smoothness and enforce compliance with hard constraints. In contrast, DiffOG incorporates a rigorously designed optimization layer that not only enables actions to satisfy hard constraints, but also enhances the interpretability of the policy.

Leto \cite{xu2024leto}, by employing differentiable optimization, enables end-to-end training and inference that embedded trajectory optimization, allowing constraints to be incorporated as part of the training objectives. However, Leto was designed as a discriminative model, which limits its ability to represent multi-modal action distributions \cite{florence2022implicit}. Its use of an optimization layer as the action head makes it difficult to integrate with commonly-used diffusion-based action heads \cite{chi2023diffusionpolicy,octo_2023,liu2024rdt,ze20243d}, significantly reducing its flexibility and applicability. In comparison, DiffOG offers greater flexibility and enhanced representational capacity, enabling it to achieve superior policy performance and adapt more effectively to a wider range of tasks.

\section{Method}

In this section, the details of the DiffOG will be introduced.

\subsection{Overview}

In imitation learning, the policy learns the demonstrated task skills through supervised learning on the demonstration dataset, as shown in Fig.~\ref{fig:overview}. The successful execution of a task depends on the policy generating action data that matches the distribution of the demonstration dataset. The objective of supervised learning in this context is to enable the policy to  {generate actions that follow} the same distribution as the demonstration dataset. However, traditional policy learning neglects trajectory constraints and smoothness. As a result, post-processing the policy’s output to enforce these properties tends to shift the actions away from the demonstrated distribution, thereby compromising policy performance.

To overcome this challenge, DiffOG leverages the differentiability of the optimization problem to directly learn a trajectory optimizer within a supervised learning framework. This trajectory optimizer is designed to balance objectives including constraint satisfaction, smoothness, and fidelity to the dataset, such that the resulting actions remain within the demonstrated distribution. Moreover,  {by incorporating a transformer} into the differentiable optimization layer, DiffOG significantly enhances both the generalization ability and representational power of trajectory optimization, making it applicable to a wide range of tasks, including long-horizon manipulation tasks with high-dimensional action spaces.

The primary objective of DiffOG is to take an unoptimized action trajectory ${\mathbf{a}}_t$ as input and output an optimized trajectory that can accomplishes the demonstrated task, ensuring constraint satisfaction and improving smoothness. During training, DiffOG supports two modes:

1. DiffOG dataset training: Supervised learning is performed directly on the demonstration dataset.

2. DiffOG refine training: Supervised learning refines the actions generated by a pre-trained policy.

In the inference phase, DiffOG is applied atop the base policy to optimize the actions. The overall workflow is illustrated in Figs.~\ref{fig:firstPage} and \ref{fig:overview}, showing how an unoptimized trajectory is transformed into an optimized one through our transformer-based differentiable trajectory optimization framework. In the following sections, we elaborate on the process of transforming an unoptimized action sequence into an optimized one, explain why it is trainable, describe how this process is designed to be generalizable, and detail the training procedure.

\subsection{Action Space}

We first analyze the action space to introduce the trajectory formulation of the actions generated by visuomotor policies.

The datasets used in imitation learning adhere to the format of
$$\mathcal{D}=\left\{\left(o_0^i, a_0^i, o_1^i,a_1^i, \ldots, o_{T^i}^i,a_{T^i}^i\right)\right\}_{i=1}^{N_d},$$ where $o$ represents the observations of the robot, such as images of the cameras with different views and robot states, and $a\in\mathbb{R}^{D_a}$ represents the demonstrated action aligned with the observation $o$. $N_d$ is the total number of demonstrated trajectories, $T$ is the total length of a trajectory, and $D_a$ is the dimension of action $a$.

 {Actions fall into two categories}: discrete actions, such as controlling grasping and releasing using discrete values \cite{mandlekar2022matters}, and continuous actions, such as the motion of the robot. Discrete actions, by their nature, cannot form a continuous trajectory and therefore do not require trajectory optimization. To address this distinction, we introduce a selection matrix $S \in \mathbb{R}^{D_c \times D_a}$ that identifies the action dimensions to be considered for trajectory optimization. Specifically, $S$ is used to select: 1) degrees of freedoms (DOFs) that are inherently continuous and capable of forming a trajectory, and 2) DOFs that require trajectory optimization in practical applications. The resulting filtered action variable is expressed as $c = Sa \in \mathbb{R}^{D_c}$. In this paper, we filter out the grasping action using $S$, as grasping typically does not require trajectory motion constraints and optimization.

Recently, considerable papers demonstrate that using the training objective of predicting a sequence of actions results in a more effective policy compared to predicting a single action \cite{chi2023diffusionpolicy,zhao2023learning,lee2024behavior}. Therefore, at time step $t$, the action sequence output of policies adhere to the format of 
\begin{align}
{\mathbf{a}}_t = \left[
{a}_t^{\mathrm{T}},{a}_{t+1}^{\mathrm{T}},\ldots,{a}_{t+T_p-1}^{\mathrm{T}}
\right]^{\mathrm{T}}\in \mathbb{R}^{T_pD_a} \label{eq:action_eq}
,\end{align}
where $T_p$ is the length of the predicted action sequence and $\mathbf{a}_t$ is a sequence of actions $a$. Apply the selection matrix $S \in \mathbb{R}^{D_c \times D_a}$ to $\mathbf{a}_t$, we have
\begin{align}
{\mathbf{c}}_t = \mathbf{S}{\mathbf{a}}_t = \left[
{c}_t^{\mathrm{T}},{c}_{t+1}^{\mathrm{T}},\ldots,{c}_{t+T_p-1}^{\mathrm{T}}
\right]^{\mathrm{T}} \in \mathbb{R}^{T_pD_c},\label{eq:select}
\end{align}
where $\mathbf{S} = \text{blkdiag}\left(S,\dots,S \right)
    \in \mathbb{R}^{T_pD_c \times T_pD_a}$ is a sequence of the selection matrix $S$ and $\mathbf{c}_t$ is a sequence of filtered actions $c$.
    
In discrete trajectory form \eqref{eq:select}, the time derivative of ${\mathbf{c}}_t$ is expressed as
\begin{align}
\frac{d{\mathbf{c}}_t}{dt} &=\left[(\frac{d{{c}}_t}{dt})^{\mathrm{T}},(\frac{d{{c}}_{t+1}}{dt})^{\mathrm{T}},\ldots,(\frac{d{{c}}_{t+T_p-2}}{dt})^{\mathrm{T}}\right]\notag\\
 & = \frac{1}{\Delta t}\left[{c}_{t+1}^{\mathrm{T}}-{c}_{t}^{\mathrm{T}},\ldots,{c}_{t+T_p-1}^{\mathrm{T}} - {c}_{t+T_p-2}^{\mathrm{T}}\right]^{\mathrm{T}}\notag\\
 & =  \frac{1}{\Delta t}\left[ \begin{matrix}
-1 & 1 &  &  &  \\
 & -1 & 1 &  &  \\
 &  & \ddots & \ddots &  \\
 &  &  & -1 & 1 
\end{matrix}  \right]{\mathbf{c}}_t\notag\\
 & = \mathbf{A}_{\text{diff}} {\mathbf{c}}_t \in \mathbb{R}^{T_pD_c-D_c}.\label{eq:dc}
 \end{align}

\begin{figure*}[t]
\centering
\begin{overpic}[trim=0 0 40 0,clip, width=0.98\textwidth]{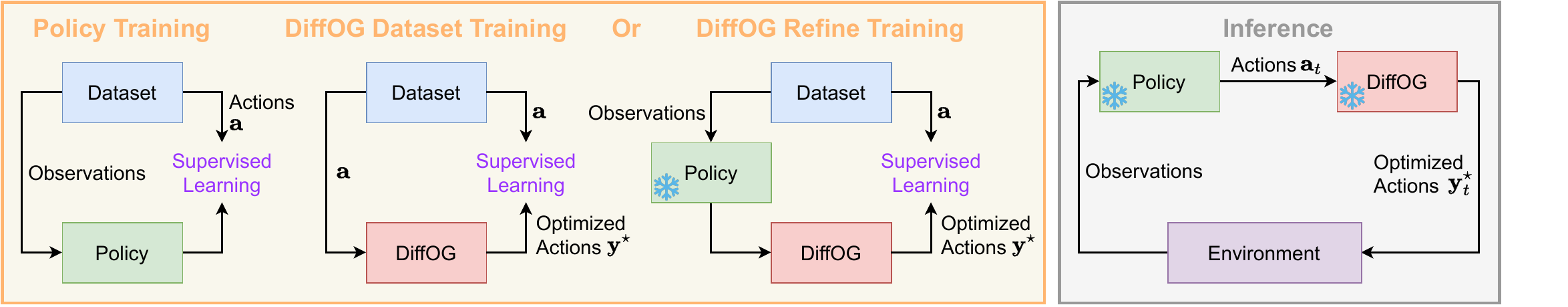}
\end{overpic}
\caption{High-level Overview of training and inference of DiffOG. A detailed illustration of DiffOG (red block) can be found in Fig. \ref{fig:firstPage}.}
\label{fig:overview}
\end{figure*}

\subsection{Differentiable Trajectory Optimization}\label{sec:dto}

In this section, we introduce the formulation of the differentiable trajectory optimization problem, which is the optimization process that converts an unoptimized action trajectory to the optimized action trajectory. We first define the optimized trajectory output by DiffOG as:

$${\mathbf{y}}_t = \left[
\hat{a}_t^{\mathrm{T}},\hat{a}_{t+1}^{\mathrm{T}},\ldots,\hat{a}_{t+T_p-1}^{\mathrm{T}}
\right]^{\mathrm{T}}\in \mathbb{R}^{T_pD_a}.
$$
The definition of ${\mathbf{y}}_t$ follows \eqref{eq:action_eq} and~$\hat{a}$~here simply represents optimized actions. Similarly, from equations \eqref{eq:select} and \eqref{eq:dc}, the following equations of ${\mathbf{y}}_t$ can be derived
\begin{align}
{\mathbf{\hat{c}}}_t &= \mathbf{S}{\mathbf{y}}_t = \left[
\hat{c}_t^{\mathrm{T}},\hat{c}_{t+1}^{\mathrm{T}},\ldots,\hat{c}_{t+T_p-1}^{\mathrm{T}}
\right]^{\mathrm{T}} \in \mathbb{R}^{T_pD_c},\label{eq:select_y}\\
\frac{d\hat{\mathbf{c}}_t}{dt} &=  \mathbf{A}_{\text{diff}}\mathbf{S}{\mathbf{y}}_t. \label{eq:dc_y}
\end{align}

Based on all notations we defined, we can write the following optimization problem which take $\mathbf{a}_t$ as input and outputs the optimized $\mathbf{y}^{\star}_t$
 \begin{align}
\mathbf{y}^{\star}_t=&\underset{\mathbf{y}_t}{\operatorname{argmin}} \frac{1}{2} \mathbf{y}_t^{\mathrm{T}} \mathbf{Q} \mathbf{y}_t+\mathbf{a}_t^{\mathrm{T}} \mathbf{y}_t +\frac{\alpha}{2} (\frac{d\hat{\mathbf{c}}_t}{dt}) ^{\mathrm{T}} \frac{d\hat{\mathbf{c}}_t}{dt}, \label{eq:opt1}\\
\text { subject to }~
 &d_{\text{min}} \Delta t \leq \hat{c}_{t+k+1} - \hat{c}_{t+k} \leq d_{\text{max}} \Delta t, \label{eq:vel_con}\\
 &k = 0, 1,  \ldots, T_p-2.\notag
\end{align}
In optimization problem \eqref{eq:opt1}, $\mathbf{Q}$ is a weight matrix, $\alpha$ is a scalar, and $d_{\text{max}}, d_{\text{min}}$ are constraint bounds for the derivatives of the robot actions. Intuitively, $\alpha$ reflects the degree to which the trajectory smoothness is emphasized. When the sum of the derivatives of the robot actions $\hat{\mathbf{c}}_t$ in the generated trajectory $\mathbf{y}_t$ is large, it indicates that the trajectory is less smooth. For instance, excessive acceleration or velocity contributes to an increase in the cost function. Therefore, we call $\alpha$ as smoothing weight. 
Optimization problem \eqref{eq:opt1} aims to solve for an action trajectory that satisfies the constraints on action derivatives while achieving greater smoothness, making it a trajectory optimization problem. $\mathbf{Q}$ also has a intuitive theoretical interpretation, which will be elaborated in detail in Section~\ref{sec:obj}. In Section~\ref{sec:obj}, we will  {dive} deeper into the trajectory optimization objectives by incorporating learning-based perspectives.

Optimization problem \eqref{eq:opt1} is a quadratic program. In DiffOG, we instantiate this class by proposing a specific, learnable trajectory optimization formulation tailored for imitation learning. Thus far, we have introduced the overall formulation of trajectory optimization in DiffOG. To further enable learning within this optimization layer, it is essential to emphasize a key property: differentiability. A differentiable trajectory optimization module can be seamlessly integrated with neural networks for end-to-end training, enabling the optimization process to acquire stronger representational capacity through data-driven learning. In what follows, we  {dive} into the details of differentiability, beginning with the following proposition:
\begin{proposition}
If $d_{\text{min}} < d_{\text{max}}$ and $\mathbf{Q}$ is symmetric positive definite, optimization problem \eqref{eq:opt1} is always feasible and strictly convex.\label{prop:feasibility}
\end{proposition}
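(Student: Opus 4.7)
The plan is to address the two claims separately. Feasibility is established by exhibiting an explicit feasible point, and strict convexity is established by computing the Hessian of the quadratic objective and showing it is positive definite.

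For feasibility, I would use that $d_{\text{min}} < d_{\text{max}}$ implies the interval $[d_{\text{min}}\Delta t, d_{\text{max}}\Delta t]$ is nonempty. Pick any scalar $d^\star \in [d_{\text{min}}, d_{\text{max}}]$ (for concreteness, the midpoint). Construct a candidate $\mathbf{y}_t \in \mathbb{R}^{T_pD_a}$ whose $\mathbf{S}$-image $\hat{\mathbf{c}}_t = \mathbf{S}\mathbf{y}_t$ satisfies $\hat{c}_{t+k} = k\,d^\star \Delta t \cdot \mathbf{1}$, and fill in the remaining (filtered-out) coordinates of $\mathbf{y}_t$ arbitrarily. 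This is possible because $\mathbf{S}=\blkdiag(S,\dots,S)$ is a block-diagonal selection matrix with surjective blocks, so any prescribed value of $\hat{\mathbf{c}}_t$ can be realized. Then $\hat{c}_{t+k+1}-\hat{c}_{t+k}=d^\star\Delta t \in [d_{\text{min}}\Delta t, d_{\text{max}}\Delta t]$ for every $k$, so constraint \eqref{eq:vel_con} holds and the feasible set is nonempty.

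For strict convexity, I would substitute \eqref{eq:dc_y} into the cost so the objective becomes
\begin{equation*}
f(\mathbf{y}_t) = \tfrac{1}{2} \mathbf{y}_t^{\mathrm{T}} \bigl( \mathbf{Q} + \alpha\, \mathbf{S}^{\mathrm{T}} \mathbf{A}_{\text{diff}}^{\mathrm{T}} \mathbf{A}_{\text{diff}} \mathbf{S} \bigr) \mathbf{y}_t + \mathbf{a}_t^{\mathrm{T}} \mathbf{y}_t .
\end{equation*}
The Hessian is $\mathbf{H} = \mathbf{Q} + \alpha\, \mathbf{S}^{\mathrm{T}} \mathbf{A}_{\text{diff}}^{\mathrm{T}} \mathbf{A}_{\text{diff}} \mathbf{S}$. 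By hypothesis $\mathbf{Q} \succ 0$, and since $\alpha \geq 0$ (the smoothing weight is nonnegative by construction), the second term is a scaled Gram matrix and hence positive semidefinite. The sum of a positive definite and a positive semidefinite matrix is positive definite, so $\mathbf{H} \succ 0$, making $f$ a strictly convex quadratic. The constraint set in \eqref{eq:vel_con} is an intersection of finitely many closed half-spaces, hence convex, and we just showed it is nonempty; therefore a unique minimizer exists.

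The argument is fundamentally a bookkeeping exercise: the only subtlety is checking that $\mathbf{S}$ and $\mathbf{A}_{\text{diff}}$ compose so that the smoothness term really contributes a PSD piece to the Hessian, and noting explicitly that $\alpha \geq 0$ is required (otherwise strict convexity could fail if the smoothness term were subtracted strongly enough to overwhelm $\mathbf{Q}$). I do not anticipate any genuinely hard step.
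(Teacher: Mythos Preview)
Your proposal is correct and follows essentially the same approach as the paper: both substitute \eqref{eq:dc_y} to write the Hessian as $\mathbf{Q} + \alpha\,\mathbf{S}^{\mathrm{T}}\mathbf{A}_{\text{diff}}^{\mathrm{T}}\mathbf{A}_{\text{diff}}\mathbf{S}$ and use that a positive definite plus a positive semidefinite matrix is positive definite, and both invoke $d_{\text{min}}<d_{\text{max}}$ for feasibility. Your treatment of feasibility is in fact more careful than the paper's, which simply asserts that $d_{\text{min}}<d_{\text{max}}$ makes the constraints non-contradictory without exhibiting a feasible point; your explicit construction (and your observation that the PSD argument needs only the Gram form, not any rank property of $\mathbf{S}$) tightens the presentation without changing the underlying idea.
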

\begin{proof}
The cost function of optimization problem \eqref{eq:opt1} can be expressed as 
$$
\frac{1}{2} \mathbf{y}_t^{\mathrm{T}} \left(\mathbf{Q} + \alpha \mathbf{S}^\mathrm{T} \mathbf{A}_{\text{diff}}^\mathrm{T} \mathbf{A}_{\text{diff}} \mathbf{S}\right) \mathbf{y}_t + \mathbf{a}_t^{\mathrm{T}} \mathbf{y}_t.
$$
Because rank($S$) = $D_c$, rank($\mathbf{S}$) = $T_pD_c$. Because $c$ is taken from a subset of $a$, it follows that $D_a \geq D_c$. Therefore, $\mathbf{S}$ has full rank. Since \(\mathbf{Q}\) is symmetric positive definite and the smoothing weight $\alpha \geq 0$, the matrix \(\mathbf{Q} + \alpha \mathbf{S}^\mathrm{T} \mathbf{A}_{\text{diff}}^\mathrm{T} \mathbf{A}_{\text{diff}} \mathbf{S}\) is also symmetric positive definite. The condition \(d_{\text{min}} < d_{\text{max}}\) ensures the feasibility of constraints, guaranteeing constraints are not contradictory. 

Therefore, optimization problem \eqref{eq:opt1} is always feasible and strictly convex.
\end{proof}
\begin{remark}[Trajectory Optimization]
Proposition~\ref{prop:feasibility} guarantees that optimization problem \eqref{eq:opt1} is feasible and strictly convex, ensuring the existence of a unique optimal solution $\mathbf{y}^{\star}_t$. Moreover, the smoothing term $(\frac{d\hat{\mathbf{c}}_t}{dt})^{\mathrm{T}} \frac{d\hat{\mathbf{c}}_t}{dt}$ represents a physically interpretable quantity which reflects the trajectory's smoothness. For instance, a large $(\frac{d\hat{\mathbf{c}}_t}{dt})^{\mathrm{T}} \frac{d\hat{\mathbf{c}}_t}{dt}$ indicates a lack of smoothness in the trajectory. Consequently, optimization problem \eqref{eq:opt1} can enforce trajectory smoothing by minimizing the associated cost. Finally, the condition $d_{\text{min}} < d_{\text{max}}$ carries explicit physical meaning. For example, if the action space is defined in terms of joint angles, this condition ensures that the minimum and maximum joint velocity constraints do not conflict. 
\end{remark}

In terms of differentiability, by Proposition~\ref{prop:feasibility} and the work in \cite{amos2017optnet}, we can get the following remark.
\begin{remark}[Differentiability]
Since $\mathbf{Q} \succ 0$, optimization problem \eqref{eq:opt1} is strictly convex, ensuring a unique optimal solution $\mathbf{y}^{\star}_t$ that is continuous and subdifferentiable with respect to all variables everywhere, and differentiable at all
but a measure-zero set of points. In this case, for example, the gradient $\nabla_{\mathbf{Q}} \mathbf{y}^{\star}_t$ can be computed explicitly via the solution of the KKT conditions, leveraging the fact that the optimality conditions are differentiable \cite{amos2017optnet}.
\label{rem:diff}
\end{remark}

\subsection{The Learning Objectives of DiffOG}\label{sec:obj}

According to Remark~\ref{rem:diff}, we know that optimization problem \eqref{eq:opt1} is theoretically differentiable with respect to all variables. However, in practice, we can design the optimization process such that certain variables remain differentiable while others are treated as constants. This section will introduce the rationale behind our design choice of constants and differentiable variables and provide details on the resulting learning objectives. 

First, let us assume that $\mathbf{Q}$ is a symmetric positive definite matrix. According to Proposition~\ref{prop:feasibility}, this condition with proper constraints values ensures that optimization problem \eqref{eq:opt1} is both feasible and differentiable.
By Cholesky factorization, there exists a unique upper-triangular matrix $\mathbf{R}$ with positive diagonal entries such that 
$\mathbf{Q} = \mathbf{R}^\mathrm{T} \mathbf{R}$. 
Because $\mathbf{Q}$ is symmetric positive definite, the factor $\mathbf{R}$ is invertible (i.e., full rank). 

Next, suppose we define the vector $\mathbf{g}_t$ by the relation 
$$\mathbf{g}_t = -  (\mathbf{R}^\mathrm{T})^{-1}  \mathbf{a}_t.$$
Since $\mathbf{R}$ is full rank and upper triangular, its transpose $\mathbf{R}^\mathrm{T}$ is also invertible. Hence $(\mathbf{R}^\mathrm{T})^{-1}$ exists, and the vector $\mathbf{g}_t$ is uniquely determined by the above equation. In addition, $\mathbf{a}_t = -  \mathbf{R}^{\mathrm{T}}  \mathbf{g}_t $. Then, part of the cost function of optimization problem \eqref{eq:opt1} can be rewritten as
\begin{align*}
\frac{1}{2} \mathbf{y}_t^{\mathrm{T}} \mathbf{Q}  \mathbf{y}_t  &+  \mathbf{a}_t^{\mathrm{T}} \mathbf{y}_t\\ 
&= \frac{1}{2} \mathbf{y}_t^{\mathrm{T}} \mathbf{R}^{\mathrm{T}} \mathbf{R}  \mathbf{y}_t  +  \mathbf{a}_t^{\mathrm{T}} \mathbf{y}_t 
\\
&= \frac{1}{2} \mathbf{y}_t^{\mathrm{T}} \mathbf{R}^{\mathrm{T}} \mathbf{R}  \mathbf{y}_t 
    -  \mathbf{g}_t^{\mathrm{T}}\mathbf{R}  \mathbf{y}_t 
\\
&= \frac{1}{2}
   \Bigl(\mathbf{y}_t^{\mathrm{T}} \mathbf{R}^{\mathrm{T}} \mathbf{R}  \mathbf{y}_t
          -  2  \mathbf{g}_t^{\mathrm{T}}\mathbf{R}  \mathbf{y}_t
          +  \mathbf{g}_t^{\mathrm{T}}\mathbf{g}_t
          -  \mathbf{g}_t^{\mathrm{T}}\mathbf{g}_t
   \Bigr)
\\
&= \frac{1}{2}  \Bigl\|\mathbf{R}\mathbf{y}_t - \mathbf{g}_t\Bigr\|^2 
    +  \underbrace{\Bigl(-  \tfrac{1}{2}  \|\mathbf{g}_t\|^2\Bigr)}_{\text{independent of \(\mathbf{y}_t\)}}.
\end{align*}

Since the term \(\bigl(-\tfrac{1}{2}\|\mathbf{g}_t\|^2\bigr)\) is independent of \(\mathbf{y}_t\), it does not affect the optimization process and can be omitted from the cost function of optimization problem~\eqref{eq:opt1}. Thus, optimization problem~\eqref{eq:opt1} can be rewritten as:
 \begin{align}
\mathbf{y}^{\star}_t=&\underset{\mathbf{y}_t}{\operatorname{argmin}} \frac{1}{2}  \bigl\|\mathbf{R} \mathbf{y}_t - \mathbf{g}_t\bigr\|^2 +\frac{\alpha}{2} (\frac{d\hat{\mathbf{c}}_t}{dt}) ^{\mathrm{T}} \frac{d\hat{\mathbf{c}}_t}{dt},\label{eq:opt2}\\
\text { subject to }~
 &d_{\text{min}} \Delta t \leq \hat{c}_{t+k+1} - \hat{c}_{t+k} \leq d_{\text{max}} \Delta t,\notag\\
 &k = 0, 1,  \ldots, T_p-2.\notag
\end{align}

We can observe that if we dropout the constraints and smoothing term, the solution of optimization problem \eqref{eq:opt2} will be
\begin{align}
   \mathbf{y}_t 
 = 
-\mathbf{R}^{-1}  (\mathbf{R}^{\mathrm{T}})^{-1}  \mathbf{a}_t
 = 
-(\mathbf{R}^{\mathrm{T}}\mathbf{R})^{-1}  \mathbf{a}_t 
 = 
-  \mathbf{Q}^{-1}  \mathbf{a}_t, \label{eq:transformation}
\end{align}
which represent a linear transformation of the original action $\mathbf{a}_t$. In the presence of constraints and smoothing terms, optimization problem \eqref{eq:opt2} is no longer solely about a linear transformation; instead, it simultaneously accounts for three aspects: transformation, constraints, and smoothness. Next, we demonstrate how differentiability and learning can be leveraged to integrate these three aspects into three interpretable learning objectives.

As previously mentioned, optimization problem \eqref{eq:opt2} is derived by rewriting optimization problem \eqref{eq:opt1} using Cholesky factorization $\mathbf{Q} = \mathbf{R}^\mathrm{T} \mathbf{R}$. Therefore, in optimization problem \eqref{eq:opt2}, we can make $\mathbf{Q}$ learnable/differentiable and train differentiable trajectory optimization \eqref{eq:opt2} with the loss function
\begin{align}
\mathcal{L} = \mathbb{E}_{\mathbf{a}_t \in \mathcal{D}}\| \mathbf{y}_t^{\star}  - \mathbf{a}_t\|^2. \label{eq:loss}
\end{align}
In this case, 1) the forward pass of DiffOG involves solving optimization problem \eqref{eq:opt2} to generate $\mathbf{y}^{\star}_t$, and 2) the backpropagation process updates $\mathbf{Q}$ using the loss function \eqref{eq:loss}. 

In both the DiffOG dataset training and DiffOG refine training, the differentiability of optimization problem \eqref{eq:opt1} allows gradients from the loss to be propagated to $\mathbf{Q}$, as shown in Fig.~\ref{fig:firstPage}. This enables the update of the optimization problem itself, making the inference process progressively adapt to the demonstration data. The interpretable learning objectives can then be summarized in the following remark.

\begin{remark}[Interpretability]
Training differentiable trajectory optimization \eqref{eq:opt2} with loss function \eqref{eq:loss} results in three interpretable learning objectives:

1)
The first objective is to make the output $\mathbf{y}_t^{\star}$ as close as possible to $\mathbf{a}_t$. From the perspective of imitation learning,  {this keeps $\mathbf{y}_t^{\star}$ faithful to the demonstrations}, thereby enabling the actions generated by DiffOG to successfully perform the demonstrated task.

2)  The second objective is to make the output satisfy the constraints in optimization problem \eqref{eq:opt2}. According to Proposition~\ref{prop:feasibility}, optimization problem \eqref{eq:opt2} is guaranteed to have a unique optimal solution, and this solution will always satisfy the constraints.

3) Due to the presence of the smooth term, $\mathbf{y}_t^{\star}$ tends to be smoother. Specifically, the sum of its derivatives across time steps tends to be smaller during optimization.\label{rem:obj}
\end{remark}

Thus, the training of DiffOG balances the three interpretable learning objectives outlined in Remark \ref{rem:obj}. In essence, this process can be summarized as learning $\mathbf{Q}$ to formulate a quadratic programming problem that produces a solution balancing smoothness, enforcing constraint satisfaction, and ensuring that $\mathbf{y}_t^{\star}$ closely aligns with $\mathbf{a}_t$.

Moreover, the reason for setting only $\mathbf{Q}$ as learnable is that the remaining variables, such as the upper and lower bounds of the constraints $d_{\text{max}}$ and $d_{\text{min}}$, and the smoothing weight $\alpha$, have more direct model-based meanings and can be manually specified based on requirements without the need for learning. Different values for the smoothing weight $\alpha$, and the bounds of the constraints $d_{\text{max}}$ and $d_{\text{min}}$ can directly and controllably influence the properties of the output $\mathbf{y}_t^{\star}$.

\subsection{Parameterize $\mathbf{Q}$ with Transformer}

\begin{figure}[t]
\centering
\begin{overpic}[trim=0 0 0 0,clip, width=0.45\textwidth]{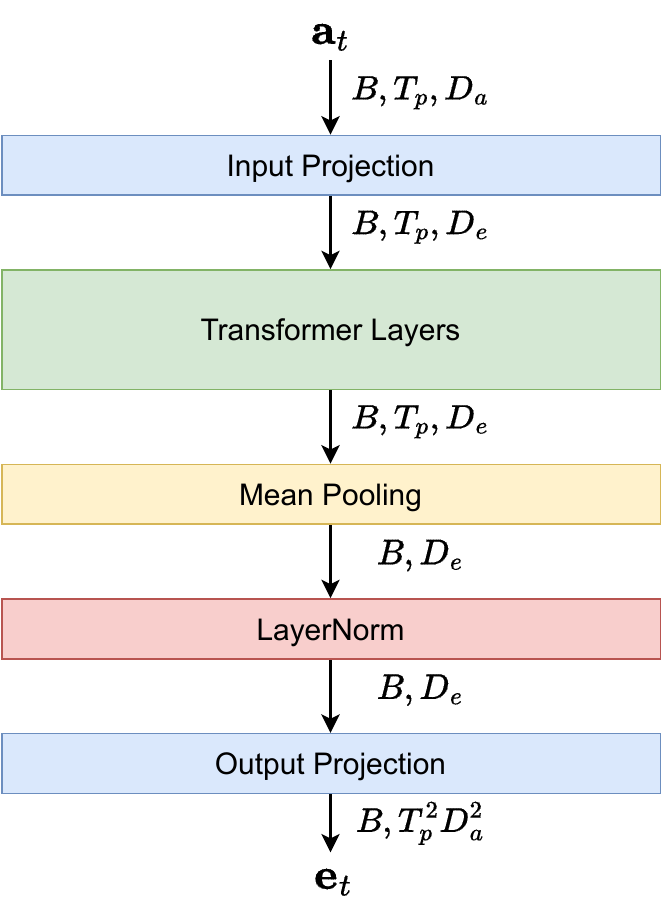}
\end{overpic}
\caption{Pipeline of the transformer encoder.}
\label{fig:model}
\end{figure}

So far, both the differentiability discussed in Remark~\ref{rem:diff} and the derivation of the interpretable learning objectives in Remark~\ref{rem:obj} require $\mathbf{Q}$ to be symmetric positive definite. This section will introduce our method for constructing a symmetric positive definite $\mathbf{Q}$.

$\mathbf{Q}$ is the learnable linear transformation. Since actions are highly diverse after sampling from  dataset $\mathcal{D}$, different action samples $\mathbf{a}_t$ may require distinct $\mathbf{Q}$ matrices to construct the trajectory optimization problem, ensuring an optimal balance of the three interpretable learning objectives described in Remark \ref{rem:obj} under the presence of smoothing terms and constraints. When the task is long-horizon, complex, and involves a high number of robot DOFs, $\mathbf{Q}$ needs to be more generalizable. 

Thanks to the well-posed properties, namely feasibility and differentiability, of our proposed trajectory optimization formulation, we are able to incorporate a transformer module to significantly enhance its capacity through a series of careful design choices. This enhancement enables the optimization process to generalize to a wider range of more complex tasks, while preserving interpretability (see Remark~\ref{rem:obj}) and maintaining feasibility and training stability (see Remark~\ref{prop:feasibility}). We use a transformer encoder $E_\theta$ to construct a symmetric positive definite $\mathbf{Q}$, as detailed in Algorithm \ref{alg:q} and Fig.~\ref{fig:firstPage}. The encoder used in Algorithm \ref{alg:q} which computes the embedding $\mathbf{e}_t = E_\theta(\mathbf{a}_t)$ is shown in Fig.~\ref{fig:model}. In Algorithm \ref{alg:q}, steps 3)–5) construct a lower triangular matrix with positive diagonal entries from the embedding output by the transformer. Additionally, the use of clamping prevents  {numerical explosion caused by exponentiation}. Based on Cholesky factorization, Step 6) results in a symmetric positive definite $\mathbf{Q}$. Step 7) further reinforces the symmetry of $\mathbf{Q}$ to mitigate the impact of minor numerical errors introduced during computation, ensuring the desired properties of $\mathbf{Q}$ are preserved. In practice, we clamp the values to the range [-10, 10]. For the small constant $\epsilon$, we use $1 \times 10^{-4}$.

By Remark \ref{rem:diff}, we know that optimization problem \eqref{eq:opt1} is differentiable, and $\nabla_{\mathbf{Q}} \mathbf{y}^{\star}_t$ can be computed. The operations in Algorithm \ref{alg:q} are also differentiable. Then $\nabla_\theta {\mathbf{Q}}$ can be computed. Therefore, based on the chain rule, DiffOG can be fully trained in an end-to-end manner. For quadratic programming solver, we utilize qpth \cite{amos2017optnet}, which enables batch-form quadratic programming computation along with efficient backpropagation.

\begin{algorithm}[t]
\caption{Constructing a symmetric positive definite matrix $\mathbf{Q}$ from $\mathbf{a}_t$}
\textbf{Input:} $\mathbf{a}_t \in \mathbb{R}^{T_pD_a}$, small constant $\epsilon > 0$.\\
\textbf{Output:} $\mathbf{Q} \in \mathbb{R}^{T_pD_a \times T_pD_a}$

\begin{enumerate}
\item $\mathbf{e}_t \gets \mathrm{Encoder}(\mathbf{a}_t)$
\item $\mathbf{L} \gets \mathbf{e}_t.\mathrm{reshape}(T_pD_a, T_pD_a)$
\item \textbf{for each diagonal entry} $\mathbf{L}_{ii}$:
      \[
      \mathbf{L}_{ii} \gets \exp(\mathbf{L}_{ii}) + \epsilon
      \]
\item $\mathbf{L} \gets \mathrm{clamp}(\mathbf{L})$ \hfill~\textbackslash \textbackslash~{prevent numerical explosion}
\item $\mathbf{L} \gets \mathrm{tril}(\mathbf{L})$ \hfill \textbackslash \textbackslash~{set upper-triangular elements to zero}
\item $\mathbf{Q} \gets \mathbf{L} \mathbf{L}^\mathrm{T} + \epsilon \mathbf{I}$
\item $\mathbf{Q} \gets \frac{1}{2} (\mathbf{Q} + \mathbf{Q}^\mathrm{T})$ \hfill \textbackslash \textbackslash~{enhance symmetry}
\end{enumerate}
\label{alg:q}
\end{algorithm}

\subsection{Inference}\label{sec:inference}
The training of DiffOG is on discrete trajectory chunks. Therefore, to fully constrain the generated trajectory during inference, the forward pass of optimization problem~\eqref{eq:opt1}  {needs} to add a new constraint. 
 \begin{align*}
\mathbf{y}^{\star}_t=&\underset{\mathbf{y}_t}{\operatorname{argmin}} \frac{1}{2} \mathbf{y}_t^{\mathrm{T}} \mathbf{Q} \mathbf{y}_t+\mathbf{a}_t^{\mathrm{T}} \mathbf{y}_t +\frac{\alpha}{2} (\frac{d\hat{\mathbf{c}}_t}{dt}) ^{\mathrm{T}} \frac{d\hat{\mathbf{c}}_t}{dt},\\
\text { subject to }~
 &d_{\text{min}} \Delta t \leq \hat{c}_{t} - \hat{c}_{t-1} \leq d_{\text{max}} \Delta t, \\
 &d_{\text{min}} \Delta t \leq \hat{c}_{t+k+1} - \hat{c}_{t+k} \leq d_{\text{max}} \Delta t, \\
 &k = 0, 1.  \ldots, T_p-2.\notag
\end{align*}
$\hat{c}_{t-1}$ is the action at the final time step from the previous iteration.

\section{Experiments}
We validated DiffOG on 13 different tasks as shown in Fig.~\ref{fig:tasks}.
Experiments involve two base policies, Diffusion Policy and 3D Diffusion Policy (DP3), along with a comprehensive comparison of trajectory processing baselines, including constraint clipping and penalty-based optimization. Additionally, we compare DiffOG with the constrained visuomotor policy \cite{xu2024leto}. We also conduct ablation studies on DiffOG.

For the hyperparameters of Diffusion Policy and DP3, we use the default parameters provided in their respective work \cite{chi2023diffusionpolicy,ze20243d}.  DiffOG, penalty-based optimization, and constraint clipping use same constraints and same base policy, either DP3 or Diffusion Policy, depending on the type of camera input used for the task. For details on the parameters of DiffOG and trajectory processing baselines, as well as information on the datasets, please refer to Appendix.

We divide the results of our experiments into four sections for presentation.

\textbf{Benchmark with Baselines Across Tasks}: We demonstrate how DiffOG optimizes the trajectories generated by Diffusion Policy and DP3 across 13 tasks. Additionally, we compare its performance with baseline methods, including constraint clipping and penalty-based trajectory optimization.

\textbf{Ablation Study on Adjustability of Trajectory}: We show how adjusting the smoothing weight and constraint bounds allows us to control the properties of the optimized actions produced by DiffOG. This section further demonstrates the interpretability of DiffOG.

\textbf{Ablation Study on Static $\mathbf{Q}$, Matrix Learning, and Transformer}: Through experiments, we demonstrate the significance of the transformer-based design in DiffOG by comparing it with static $\mathbf{Q}$ and matrix-learning-based $\mathbf{Q}$ components. The results emphasize that the transformer architecture plays a crucial role in achieving superior performance. Specifically, it enables DiffOG to fully leverage differentiability, effectively balance trajectory optimization learning objectives, and maintain high fidelity to demonstration data trajectories.

\textbf{Comparison with a Constrained Visuomotor Policy}: We present comparative experiments between DiffOG and an constrained visuomotor policy Leto \cite{xu2024leto}, showing that DiffOG achieves superior performance.

 {\textbf{Inference Time-varying Constraint Experiment}: In this experiment, the policy is trained with a fixed constraint bound but evaluated with time-varying bounds during inference, demonstrating DiffOG’s capability to generalize zero-shot to unseen constraint settings.}

 {\textbf{Residual Policy Baseline with Smoothness Penalty}: In this experiment, we train a residual policy by learning an action correction on top of base policy, with an additional smoothness penalty in the loss. By comparing the performance of DiffOG with this baseline, we further highlight how DiffOG’s structure leads to better alignment with task requirements and stronger overall performance, making it a more principled and generalizable method.}

\begin{figure*}[t]
\centering
\begin{overpic}[trim=0 0 0 0,clip, width=0.98\textwidth]{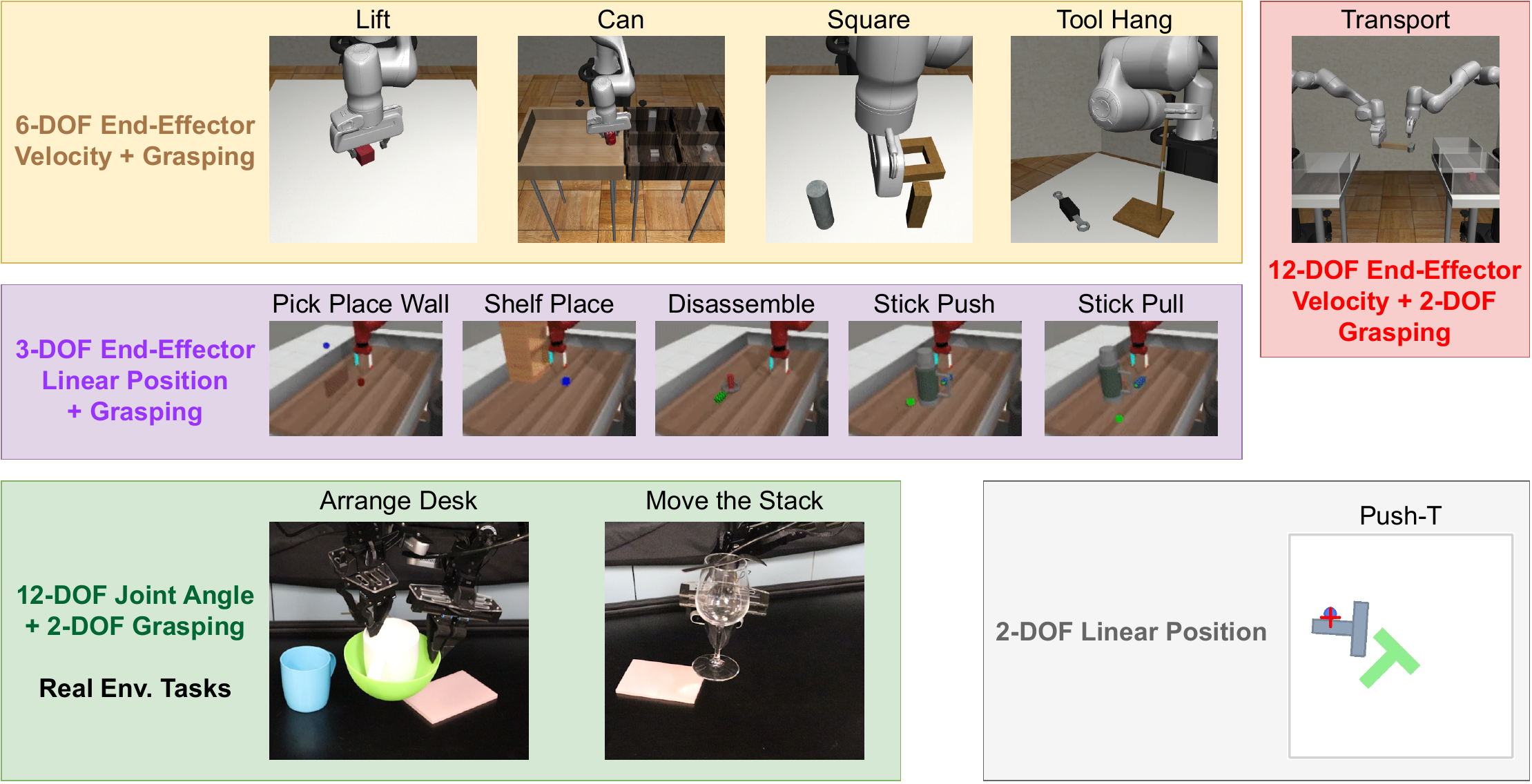}
\end{overpic}
\caption{We validated DiffOG on 13 different tasks. The 13 tasks span three types of action space.}
\label{fig:tasks}
\end{figure*}

\begin{figure*}[t]
\centering
\begin{overpic}[trim=0 0 0 0,clip, width=0.98\textwidth]{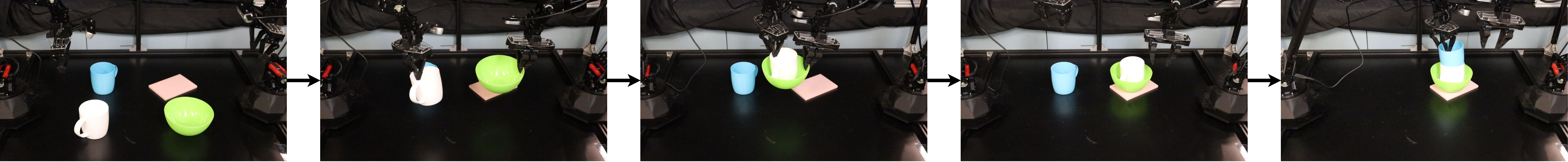}
\end{overpic}
\caption{The process of the arrange desk task.}
\label{fig:bowl_process}
\end{figure*}

\begin{figure*}[t]
\centering
\begin{overpic}[trim=0 0 0 0,clip, width=0.98\textwidth]{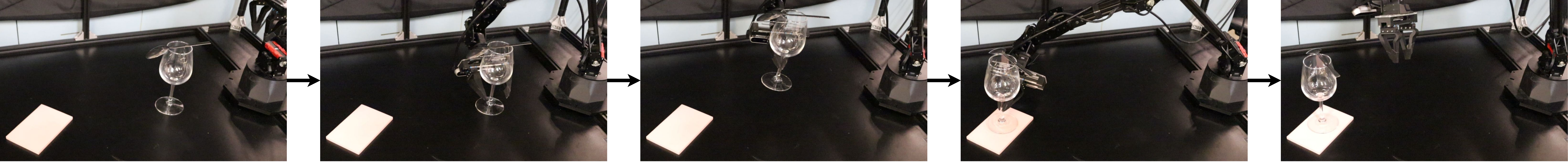}
\end{overpic}
\caption{The process of the move the stack task.}
\label{fig:glass_process}
\end{figure*}

\begin{figure*}[t]
\centering
\includegraphics[trim=0 0 0 0,clip,width=0.98\textwidth]{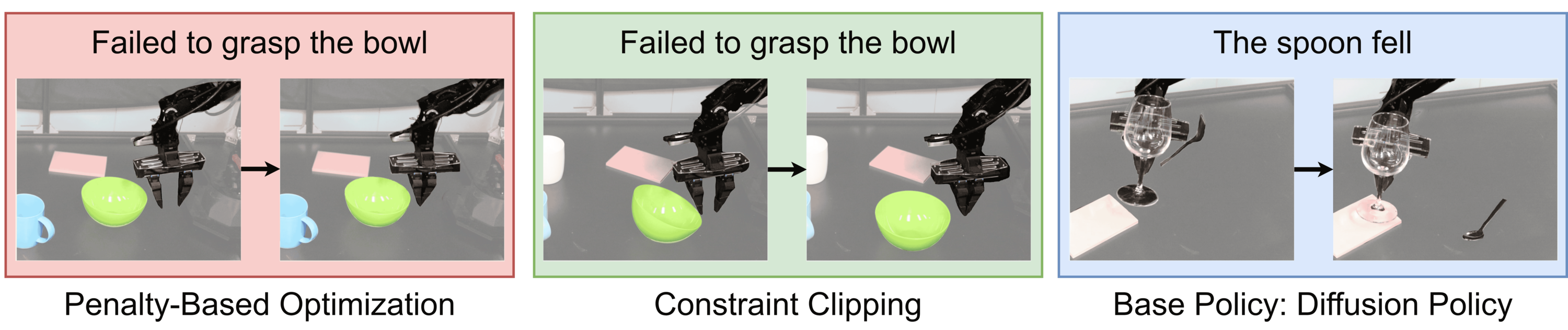}
\caption{
 {Typical failure cases of baselines.}}
\label{fig:failure}
\end{figure*}

\begin{table*}[]
\centering
\begin{tabular}{|c|c|c|c|c|c|c|}
\hline
                              & Lift             & Can              & Square           & Tool Hang        & Transport        & Push-T           \\ \hline
Base Policy: Diffusion Policy & $ {1.00}\pm {0.00} $ & {$ {0.98}\pm {0.01} $} & ${0.91}\pm {0.01}$ & $ 0.83\pm 0.03 $ & $ 0.91\pm 0.01 $ & $ 0.84\pm 0.03 $ \\ \hline
\rowcolor{gray!30}DiffOG Dataset (Ours)         & $ \textbf{1.00}\pm \textbf{0.00}$ & {$ \textbf{0.98}\pm \textbf{0.01}$}  & $ 0.87\pm 0.02 $ & $ \textbf{0.82}\pm \textbf{0.02}$  & $ 0.89\pm 0.02$  & $ \textbf{0.83}\pm \textbf{0.02} $ \\
\rowcolor{gray!30}DiffOG Refine (Ours)          & $ \textbf{1.00}\pm \textbf{0.00} $ & {$ \textbf{0.98}\pm \textbf{0.01}$} & {$ \textbf{0.90}\pm\textbf{0.01}$} &  $0.81\pm 0.02 $  &     $ \textbf{0.91}\pm\textbf{0.03} $             & $ 0.80\pm 0.02 $  \\
Constraint Clipping           & $ \textbf{1.00}\pm \textbf{0.00} $ & $ 0.93\pm 0.04 $ & $ 0.78\pm 0.03 $ & $ 0.59\pm 0.03 $ & $ 0.82\pm 0.02 $ & $ 0.82\pm 0.02 $ \\
Penalty-Based Optimization   & $ \textbf{1.00}\pm \textbf{0.00} $ & $ 0.96\pm 0.02 $ & $ 0.76\pm 0.03 $ & $ 0.79\pm 0.01 $ & $ 0.81\pm 0.02 $ & $ 0.81\pm 0.04 $ \\ \hline
\end{tabular}
\caption{Success rate of simulated tasks using Diffusion Policy \cite{chi2023diffusionpolicy} as the base policy. We report the mean success rate, calculated as the average over the final 10 checkpoints, with each checkpoint evaluated across 50 different environment initializations. The results are further averaged over 3 training seeds, totaling 150 environment evaluations. We use proficient-human demonstrations of Robomimic benchmark \cite{mandlekar2022matters}. We highlight in \textbf{bold} the method that achieves the highest success rate after applying trajectory processing.  DiffOG, penalty-based optimization, and constraint clipping use same constraints and same base policy. On challenging tasks such including square, tool hang, and transport, DiffOG outperforms penalty-based optimization with an average success rate improvement of 9\% and surpasses constraint clipping with an average success rate improvement of 14.7\%.}
\label{tab:robomimic}
\end{table*}

\begin{table*}[]
\centering
\begin{tabular}{|c|c|c|c|c|c|}
\hline
                            & Pick Place Wall  & Shelf Place      & Disassemble      & Stick Push       & Stick Pull       \\ \hline
Base Policy: DP3            & $ 0.98\pm 0.01 $ & $ 0.77\pm 0.04 $ & $ 0.87\pm 0.02$  & $ 1.00\pm 0.00 $ & $ 0.70\pm 0.02 $  \\ \hline
\rowcolor{gray!30} DiffOG Dataset (Ours)       & $ \textbf{0.98}\pm \textbf{0.01} $ & $ \textbf{0.73}\pm \textbf{0.01} $ & $ 0.89\pm 0.05$  & $ \textbf{1.00}\pm \textbf{0.00} $ & $ \textbf{0.70}\pm \textbf{0.03} $ \\
\rowcolor{gray!30} DiffOG Refine (Ours)        & $ \textbf{0.98}\pm \textbf{0.01} $ & $ 0.72\pm 0.02 $ & {$ \textbf{0.90}\pm \textbf{0.03}$}  & $ \textbf{1.00}\pm \textbf{0.00} $ & $ 0.68\pm 0.02 $  \\
Constraint Clipping         & $ \textbf{0.98}\pm \textbf{0.01} $ & $ 0.70\pm 0.01 $ & $ 0.86\pm 0.05 $ & $ \textbf{1.00}\pm \textbf{0.00} $ & $ 0.69\pm 0.02 $  \\
Penalty-Based Optimization & $ \textbf{0.98}\pm \textbf{0.01} $ & $ 0.67\pm 0.03 $ & $ 0.86\pm 0.06 $ & $ \textbf{1.00}\pm \textbf{0.00} $ & $ \textbf{0.70}\pm \textbf{0.03} $ \\ \hline
\end{tabular}
\caption{Success rate of simulated tasks using DP3 \cite{ze20243d} as the base policy. We report the mean success rate, calculated as the average over the final 10 checkpoints, with each checkpoint evaluated across 50 different environment initializations. The results are further averaged over 3 training seeds, totaling 150 environment evaluations. We highlight in \textbf{bold} the method that achieves the highest success rate after applying trajectory processing. All tasks are trained using 30 demonstrations.  DiffOG, penalty-based optimization, and constraint clipping use same constraints and same base policy.}
\label{tab:dp3}
\end{table*}

\begin{table*}[t]
\centering
\begin{tabular}{|c|c|>{\columncolor{gray!30}}c|c|c|}
\hline
               & Base Policy: Diffusion Policy & DiffOG Dataset & Constraint Clipping & Penalty-Based Optimizaiton \\ \hline
Arrange Desk   & \textbf{9/15}                          & \textbf{9/15}            & 2/15                & 4/15                        \\
Move the Stack & 3/15                          & \textbf{10/15}           & 6/15                & 7/15                        \\ \hline
\end{tabular}
\caption{Success rate of real tasks using Diffusion Policy \cite{ze20243d} as the base policy and using ALOHA \cite{zhao2023learning} as experimental platform. We highlight in \textbf{bold} the method that achieves the highest success rate.}
\label{tab:real}
\end{table*}

\begin{table}[t]
\centering
\begin{tabular}{|c|c|c|}
\hline
                              & Arrange Desk& Move the Stack\\ \hline
Base Policy: Diffusion Policy & 0.153 s                      & 0.070 s                       \\ \hline
DiffOG  Layer                   & 0.064 s                      & 0.047 s                       \\ \hline
\end{tabular}
\caption{ We report the average inference time over 50 inferences for the base policy and DiffOG on 3080. For diffusion policy, we use 16 DDIM \cite{song2021denoising} steps for arrange desk and 8 DDIM steps for move the stack, with arrange desk requiring more due to its longer horizon and higher DoF. Results demonstrate DiffOG's real-time feasibility.}
\label{tab:real_time}
\end{table}

\begin{figure*}[t]
    \centering
    \subfloat[ Max acceleration on Robomimic.]{%
        \includegraphics[width=0.45\textwidth]{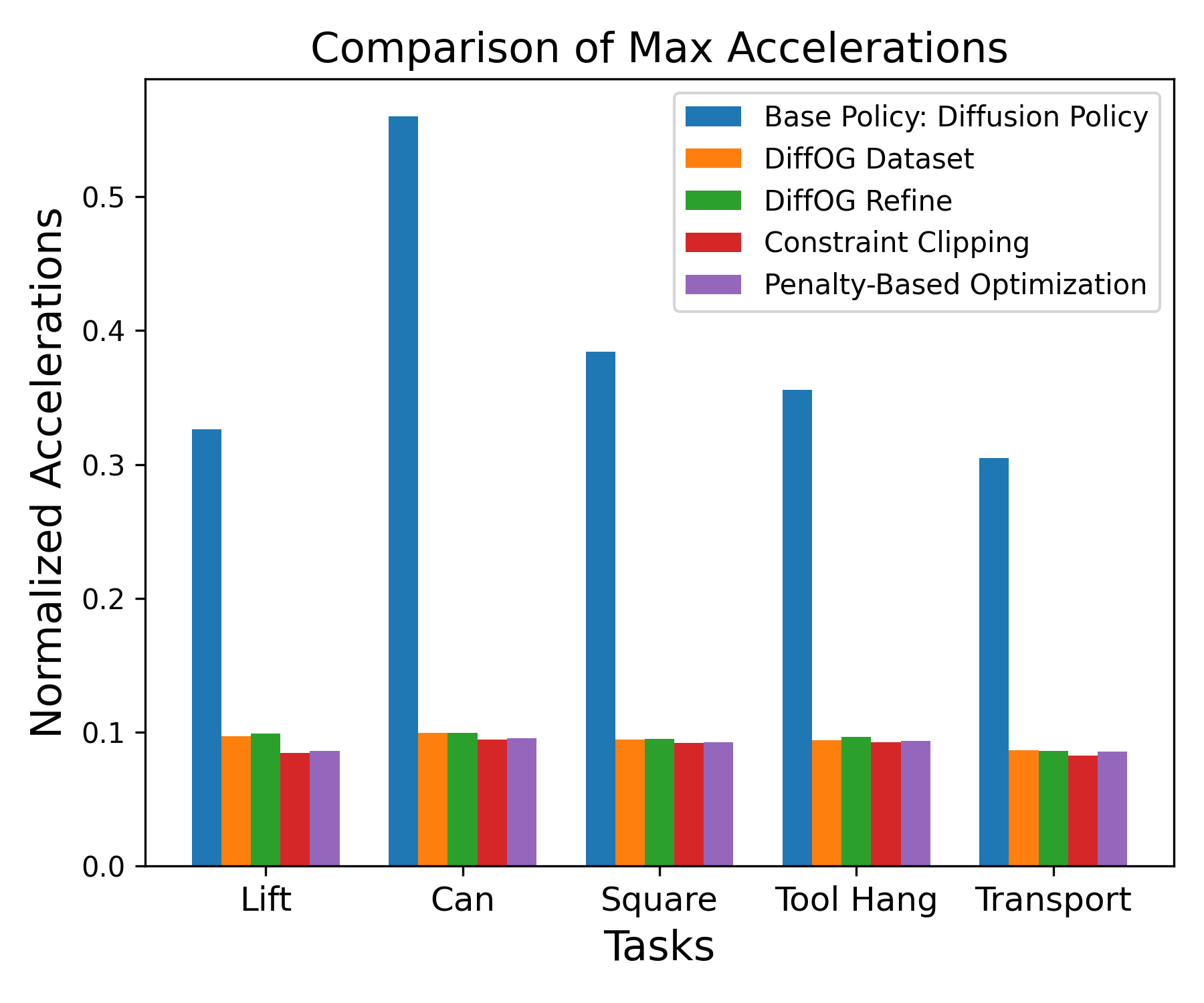}%
        }
    \subfloat[The standard deviation of acceleration on Robomimic.]{%
        \includegraphics[width=0.45\textwidth]{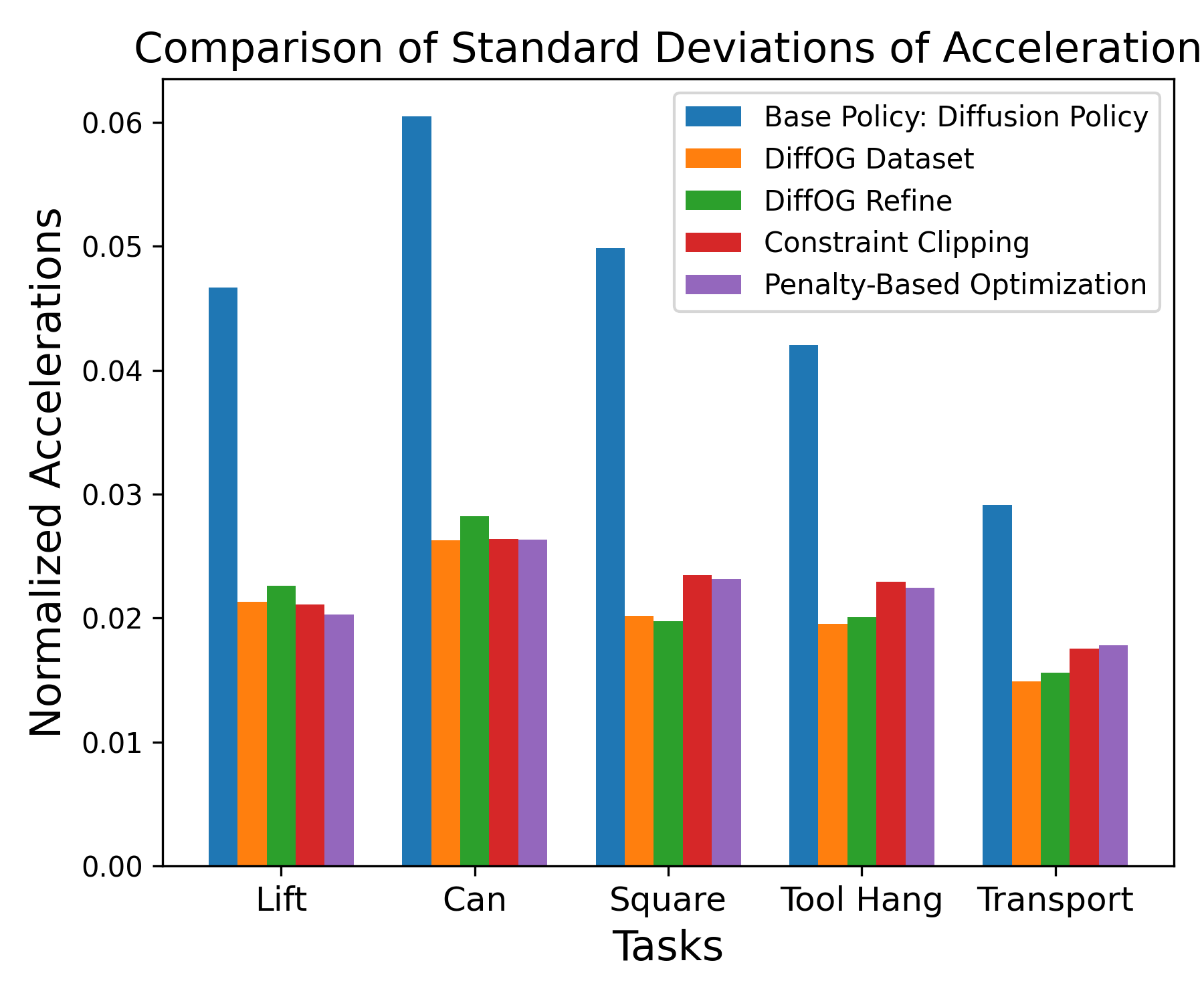}%
        }\\
    \subfloat[ Max velocity on Meta-World.]{%
        \includegraphics[width=0.5\textwidth]{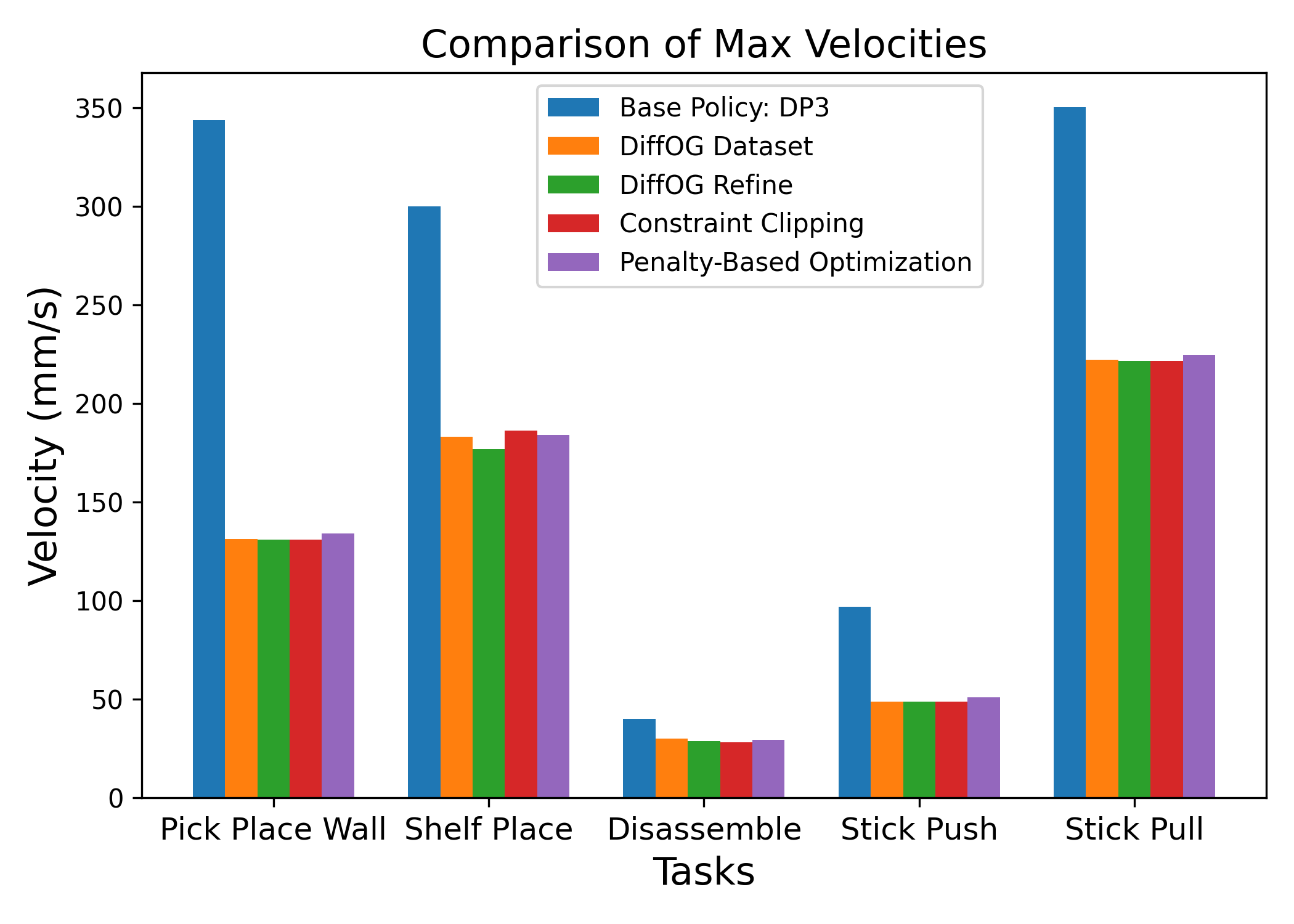}%
        }
    \subfloat[The standard deviation of velocity on Meta-World.]{%
        \includegraphics[width=0.5\textwidth]{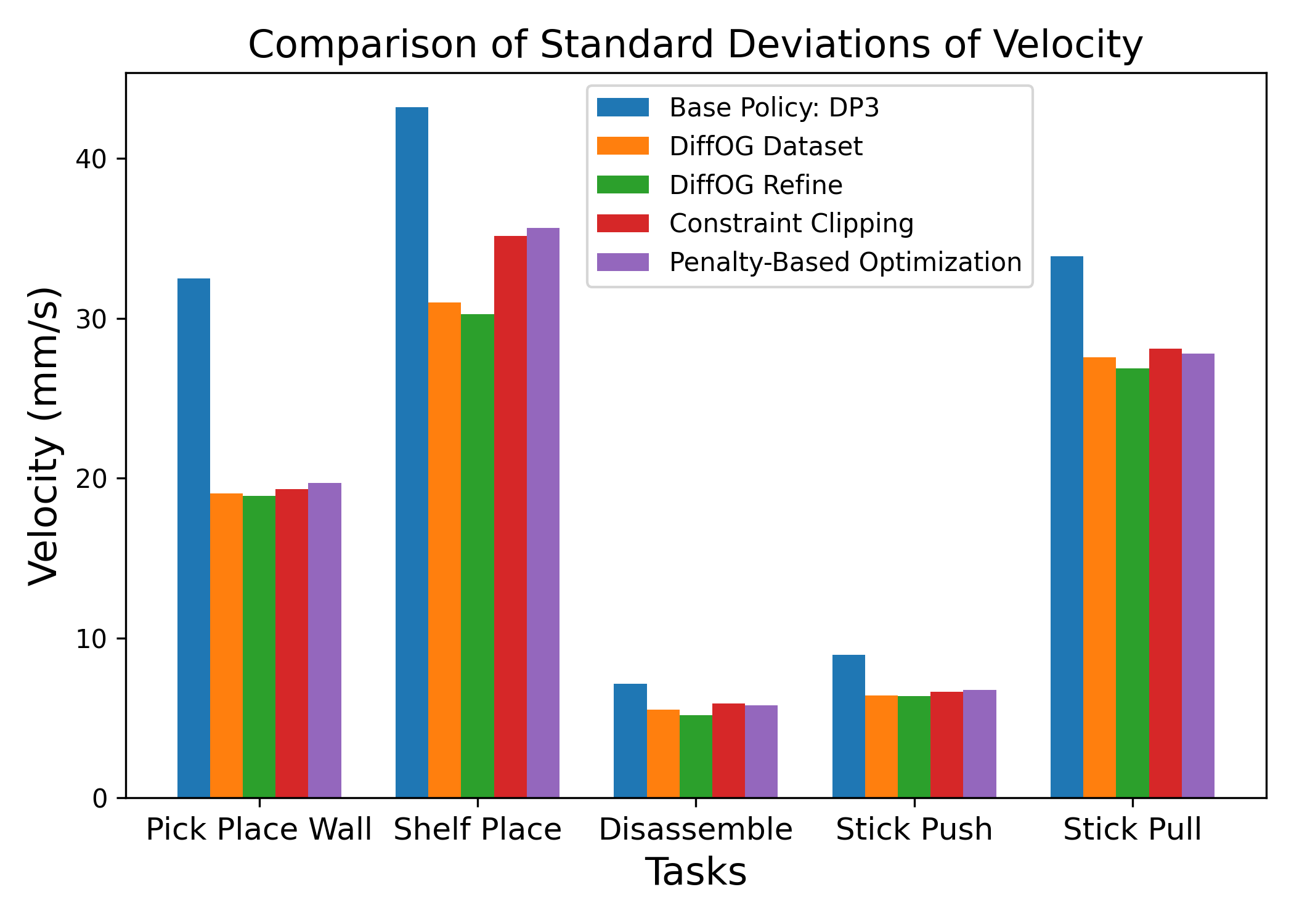}%
        }\\
    \caption{Comparison of trajectory metrics across methods on Robomimic and Meta-World benchmarks. These results are derived from 1500 rollouts (across 50 environments, 3 seeds, and the last 10 checkpoints per seed), providing strong statistical significance.  {Note that these values do not represent simple variations around a common mean and therefore cannot be interpreted in the same way as conventional mean $\pm$ standard deviation error bars, as they are not computed from a single rollout trajectory. We report them separately to highlight distinct aspects of trajectory quality.} For Robomimic tasks, the larger maximum acceleration and greater standard deviation of acceleration over time indicate less smooth trajectories. Similarly, for Meta-World tasks, this observation applies to the velocity metrics. Since the actions provided by the Robomimic dataset are already normalized \cite{mandlekar2022matters}, the acceleration metrics shown here for Robomimic are also normalized. For details on the calculation of acceleration and velocity metrics, as well as the details to unnormalize the Robomimic metrics, please refer to Appendix.}
    \label{fig:traj}
\end{figure*}

\begin{figure*}[t]
    \centering
    \subfloat[ Comparison of trajectory metrics on push-T.]{%
        \includegraphics[width=0.35\textwidth]{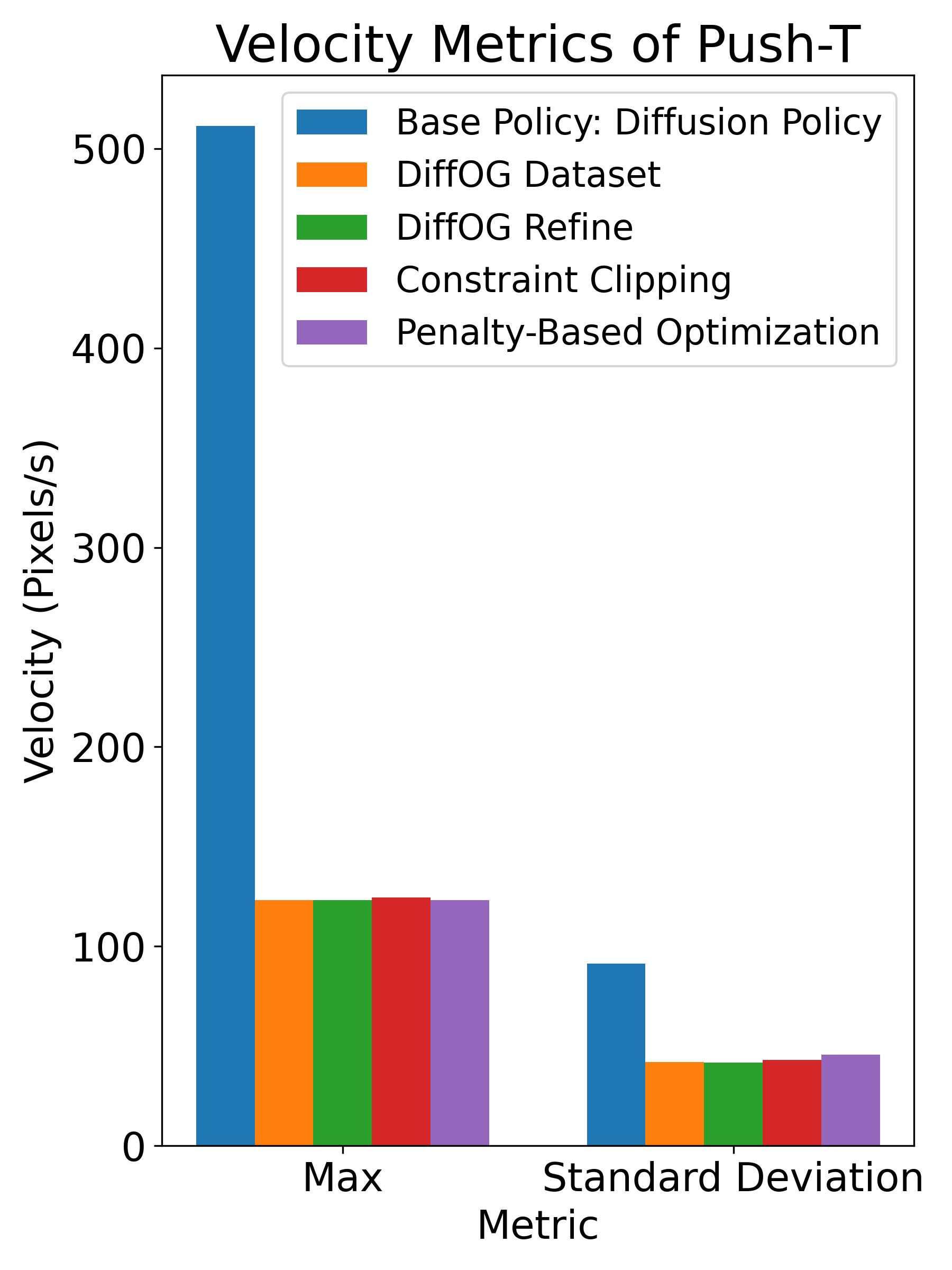}%
        }
    \subfloat[Comparison of trajectory metrics on real-world tasks.]{%
        \includegraphics[width=0.65\textwidth]{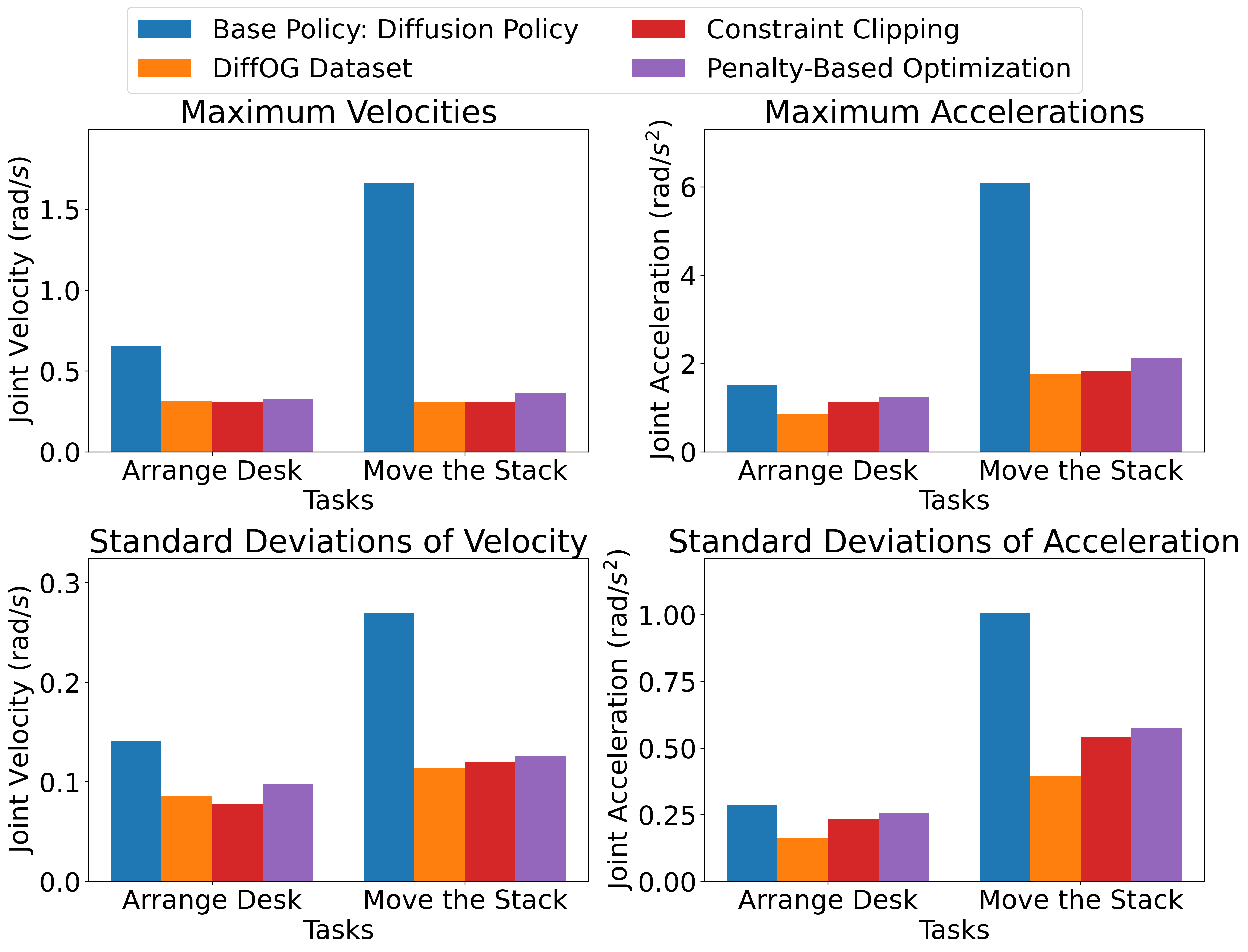}%
        }
    \caption{Comparison of trajectory metrics across methods on push-T and real-world tasks. The larger maximum velocity and acceleration and greater standard deviation of velocity  and acceleration over time indicate less smooth trajectories.  For details on the calculation of acceleration and velocity metrics, please refer to  Appendix.}
        \label{fig:traj2}
\end{figure*}

\begin{figure*}[t]
    \centering
    \subfloat[The smoothing weight varies while the constraint bound remains fixed at 0.1.]{%
        \includegraphics[width=0.98\textwidth]{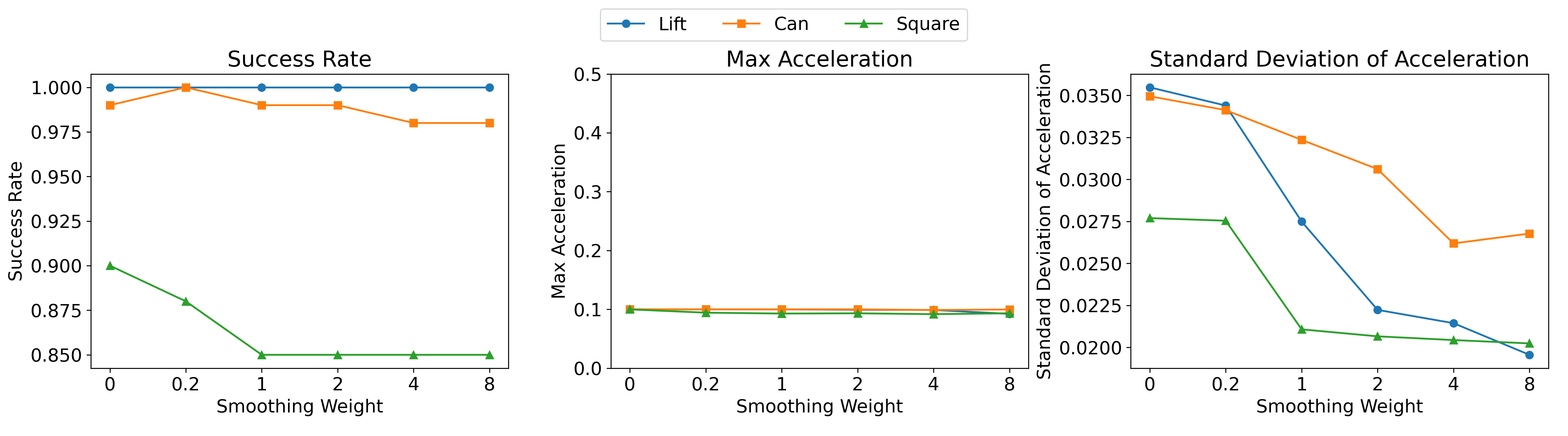}%
        }\\
    \subfloat[ The constraint bounds vary while the smoothing weight $\alpha$ remains fixed at 4.]{%
        \includegraphics[width=0.98\textwidth]{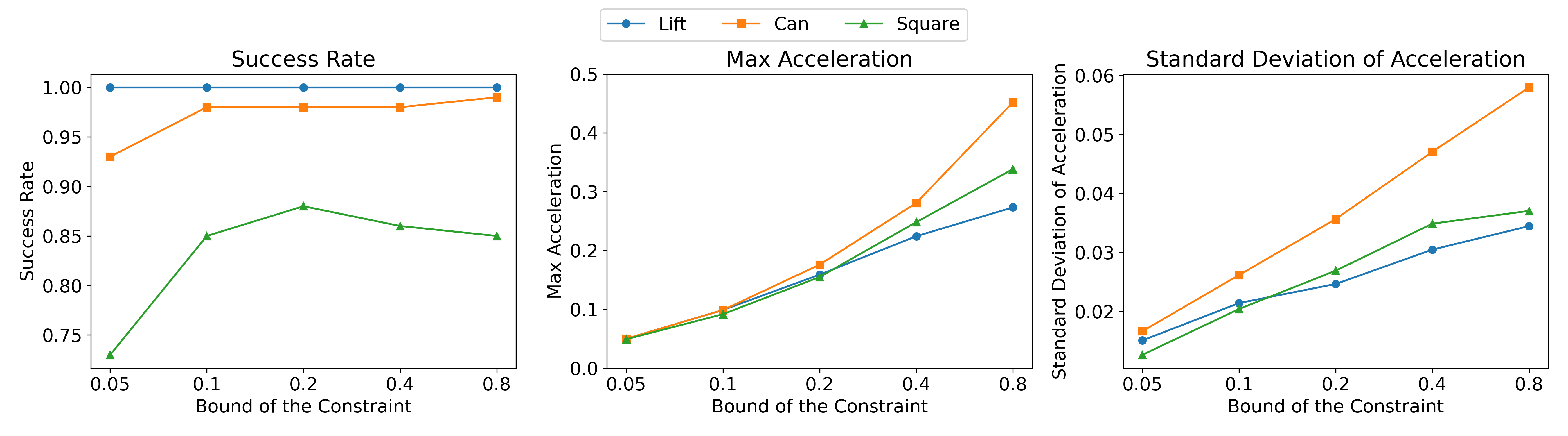}%
        }
    \caption{Ablation study on adjustability of trajectory. We use DiffOG dataset training pipeline as shown in Fig.~\ref{fig:overview}. For success rate, we report the mean success rate calculated as the average over the final 10 checkpoints, with each checkpoint evaluated across 50 different environment initializations. Since our primary goal is to observe the trend in metric variations as the smoothing weight and constraint bound change, we conducted experiments using a single seed, leading to slight variations in the mean success rate compared to the three-seed results reported in Table~\ref{tab:robomimic}. }
    \label{fig:control}
\end{figure*}

\subsection{Benchmark with Baselines Across Tasks}\label{sec:benchmark}
In this section, we demonstrate how DiffOG optimizes the trajectories generated by base policies across 13 tasks. The 13 tasks span three types of action space.  {See \hyperref[sec:task]{Appendix C} for more details.}

The purpose of the experiments  {is} to investigate the impact of applying trajectory processing, which enforces specific constraints on action trajectories, on policy performance across a wide range of tasks. Additionally, we aim to evaluate whether this approach effectively makes action trajectories smoother and more constrained. The experimental results are  shown in Tables~\ref{tab:robomimic}, \ref{tab:dp3}, \ref{tab:real} and Figs.~\ref{fig:traj} and \ref{fig:traj2}. 

 {We compute evaluation metrics based on 1500 rollouts (last 10 checkpoints $\times$ 3 seeds $\times$ 50 environment conditions) to ensure statistical significance. This is important for evaluating both task success rate, which indicates whether the method still effectively captures the distribution of demonstration data under constraints, and motion-related characteristics, which reflects trajectory quality.} By analyzing the experimental results, we can draw the following conclusions.

1. In the simulation benchmarks, we evaluate trajectory smoothness by analyzing the first-order derivatives of actions. Specifically, we compare the maximum values and standard deviations of these derivatives throughout the policy rollouts. In Robomimic, the first-order derivative corresponds to the second derivative of axis-angle, while in Meta-World and push-T, it represents  {linear velocity}. For the real-world tasks, in addition to computing the maximum and standard deviation of the first-order derivatives (i.e., joint angular velocities), we also assess the second-order derivatives (joint angular accelerations) using the same metrics. 

These values serve as indicators of trajectory smoothness: the jerkier a trajectory is, the more abrupt and random the variations in the derivatives of the quantities represented in the action space, which is captured by higher standard deviations. Furthermore, jerky trajectories tend to exhibit sharp peaks in velocity or acceleration when sudden changes occur. Therefore, comparing the maximum values of velocity and acceleration also reflects the degree of smoothness.

Trajectory metrics are evaluated under a unified standard, with rollouts terminating upon task completion. To ensure fair and statistically meaningful comparisons, we perform extensive rollouts for each method to obtain quantitative results. For trajectory metrics of simulation benchmarks, we report the average over 1500 rollouts per method (across 50 environments, 3 seeds, and the last 10 checkpoints per seed). 

According to Figs.~\ref{fig:traj} and \ref{fig:traj2}, both DiffOG, constraint clipping, and penalty-based trajectory optimization effectively constrain action trajectories. For Meta-World tasks, DiffOG and trajectory processing baselines achieve lower maximum velocity and reduced velocity standard deviation compared to the base policy trajectories. Similarly, for velocity control tasks from the Robomimic benchmark, both the maximum acceleration and the standard deviation of acceleration are lower when using DiffOG and trajectory processing baselines compared to the base policy trajectories.  Moreover, results of real-world tasks consistently indicate that applying DiffOG and trajectory processing baselines reduces these metrics. 

The comparison of these metrics demonstrates that DiffOG, constraint clipping, and penalty-based trajectory optimization can all optimize the action trajectories generated by the policy, making them smoother and more constrained.

2. 
Based on Tables~\ref{tab:robomimic}, \ref{tab:dp3}, and \ref{tab:real}, we conclude that compared to constraint clipping and penalty-based trajectory optimization, DiffOG generates action trajectories that not only satisfy the constraints but also maintain fidelity to the dataset. As a result, DiffOG outperforms constraint clipping and penalty-based trajectory optimization across all tasks. This demonstrates that DiffOG effectively minimizes the impact on the base policy while further optimizing action trajectories to make them smoother and more constraint-compliant.  

3. Although constraint clipping and penalty-based optimization achieve acceptable performance on some tasks, they lead to a significant decline in base policy performance for more challenging tasks, such as square, tool hang, transport, and arrange desk. This decline occurs because the base policy was not trained with constraints as part of its learning objectives. Applying constraint clipping or penalty-based trajectory optimization introduces information loss, resulting in action trajectories that deviate from the demonstration dataset.

As shown in Fig.~\ref{fig:failure}, a common failure in the arrange desk task is the inability to grasp the bowl. This issue arises because trajectory processing baselines cause the actions to deviate from the fidelity of the demonstration, leading to task failures.

In contrast, DiffOG, through a data-driven approach, effectively balances the objectives of trajectory optimization and fidelity to the dataset. Consequently, for these challenging tasks, DiffOG significantly outperforms both constraint clipping and penalty-based trajectory optimization. On the four challenging tasks: square, tool hang, transport, and arrange desk, DiffOG achieves an average success rate that is 22.7\% higher than constraint clipping and 15.1\% higher than penalty-based optimization. Compared to the baselines, DiffOG consistently delivers good performance across both simple and challenging tasks.

4. 
For tasks where trajectory constraints and smoothness are crucial, such as move the stack in Table~\ref{tab:real}, the base policy often fails due to the jerkiness of the action trajectories, resulting in a high failure rate.  {The jerky behavior of base policy mainly arises from two factors: 1) lack of explicit trajectory constraints, and 2) absence of smoothness priors in the policy output. These are common issues in end-to-end neural policies, especially for manipulation tasks requiring smooth, precise, and constraint-aware motions.} In contrast, DiffOG significantly improves trajectory quality, leading to better policy performance. 

5. We report the inference time of DiffOG in real-world experiments, as shown in Fig. \ref{tab:real_time}, demonstrating DiffOG's real-time feasibility. 

\subsection{Ablation Study on Adjustability of Trajectory}\label{sec:control}
DiffOG combines model-based and data-driven designs through differentiable trajectory optimization, offering strong interpretability. In this section, we present an ablation study to demonstrate how changes in the smoothing weight $\alpha$ and the constraint bounds $d_{\text{max}} \Delta t$ and $d_{\text{min}} \Delta t$ affect the action trajectories generated by the differentiable trajectory optimization learned by DiffOG. The results are shown in Fig.~\ref{fig:control}. 

Fig.~\ref{fig:control}(a) corresponds to the scenario where the smoothing weight varies while the constraint bound remains fixed at 0.1. From Fig.~\ref{fig:control}(a), we observe that an excessively large smoothing weight slightly reduces the policy's performance. This is reasonable because when the smoothing weight is very large, the objective during DiffOG's learning process prioritizes fitting the demonstration with extremely smooth trajectories. In such cases, the fidelity of the learned trajectories to the demonstration decreases. 

On the other hand, we observe that when the smoothing weight changes but the constraint bounds remain unchanged, the generated trajectories consistently adhere to the constraints. However, as the smoothing weight increases, the trajectories become progressively smoother, as evidenced by a decreasing standard deviation of acceleration. Thus, changes in the smoothing weight directly affect the characteristics of the generated trajectories. 

In practical applications of DiffOG, it is essential to select an appropriate smoothing weight that balances achieving superior smoothness with maintaining policy performance. Based on Fig.~\ref{fig:control}(a), we selected a smoothing weight of $\alpha=4$ for the Robomimic benchmark.

Fig.~\ref{fig:control}(b) corresponds to the scenario where the constraint bounds vary while the smoothing weight remains fixed at 4. From Fig.~\ref{fig:control}(b), we observe that overly small constraint bounds lead to a decline in policy performance. This is because when the constraint bounds are very tight, the objective during DiffOG's learning process becomes fitting the demonstration with highly constrained trajectories. Under such conditions, the fidelity of the learned trajectories to the demonstration decreases.

When we adjust the constraint bounds, the maximum acceleration of the generated trajectory changes accordingly, and the smoothness of the trajectory is also affected. This indicates that in practical applications of DiffOG, it is crucial to select appropriate constraint bounds that effectively constrain the trajectory while maintaining policy performance. Based on Fig.~\ref{fig:control}(b), we selected a constraint bound of 0.1 for the Robomimic benchmark.

 {While the smoothing weight $\alpha$ and constraint bounds are manually selected, their selection can be guided by interpretable trends revealed in Fig.~\ref{fig:control}. Rather than hindering practical deployment, their interpretability, robustness, and task-specific tunability make DiffOG a flexible and practical framework for real-world use.}

\subsection{Ablation Study on Static $\mathbf{Q}$, Matrix Learning, and Transformer}\label{sec:tf}

From equation \eqref{eq:transformation}, we can observe that if we set $\mathbf{Q} = -\mathbf{I}$, the term $\bigl\|\mathbf{R} \mathbf{y}_t - \mathbf{g}_t\bigr\|^2$ in optimization problem \eqref{eq:opt2} simplifies to $\bigl\|\mathbf{y}_t - \mathbf{a}_t\bigr\|^2$, because $\mathbf{Q} = \mathbf{R}^\mathrm{T} \mathbf{R}$ and $\mathbf{g}_t = -  (\mathbf{R}^\mathrm{T})^{-1}  \mathbf{a}_t$. Under this setting, \eqref{eq:opt2} becomes a trajectory optimization problem with both a smoothing cost and a distance cost. On the other hand, instead of using a transformer as in DiffOG, we can also directly learn the matrix $\mathbf{Q}$. In this section,  we conduct experiments on the following three different cases: 

1. When $\mathbf{Q} = -\mathbf{I}$ and is kept static. In this case, the trajectory optimization is non-learnable.

2. When $\mathbf{Q} = \mathbf{L} \mathbf{L}^\mathrm{T} + \epsilon \mathbf{I}$ and the matrix $\mathbf{L}$ is directly learned. In this case, the trajectory optimization is differentiable, but its composition consists solely of matrices without neural networks. Both dataset training and refinement training can be performed in this setup.

3. DiffOG, where $\mathbf{Q}$ is generated by a transformer. In this case, the trajectory optimization is differentiable. Both dataset training and refinement training can also be performed.

We selected the  {tool hang and transport as the test tasks. These are highly challenging long-horizon manipulation tasks, which make trajectory optimization difficult due to the need to generalize across diverse trajectory patterns over extended time horizons while maintaining precise control.} The experimental results are shown in Table~\ref{tab:matrix_comp}. From the table, we observe that DiffOG achieves the best performance  {for both tasks}. 

The explanation behind this is that for static $\mathbf{Q}$ and matrix-learning-based $\mathbf{Q}$, trajectory optimization cannot effectively adjust to different trajectory inputs. This highlights the flexibility and generalization capability of DiffOG  {through transformer-based embeddings.}

Additionally, during refine training, due to the presence of a multi-modal action distribution in demonstration dataset \cite{chi2023diffusionpolicy, florence2022implicit}, the actions generated by the pre-trained policy may differ from the targets in the dataset. This conflict between the targets and inputs does not exist in dataset training paradigm of DiffOG and matrix learning. From the results of matrix-learning refine and DiffOG refine, we can observe that trajectory optimization based on matrix learning struggles to handle such conflict. However, DiffOG can handle this conflict with transformer encoder.

 {Moreover, as shown in Table~\ref{tab:inf_abl}, the computational costs of solving a static $\mathbf{Q}$, optimizing with a matrix-based representation (matrix learning), and solving the transformer-based optimization (DiffOG) are comparable. This suggests that the additional inference time compared to base policy primarily stems from solving the optimization problem, rather than from the transformer forward pass.}

\begin{table*}[t]
\centering
\begin{tabular}{|c|>{\columncolor{gray!30}}c|>{\columncolor{gray!30}}c|c|c|c|}
\hline
          & DiffOG Dataset & DiffOG Refine & Static $\mathbf{Q}$ & Matrix Learning Dataset & Matrix Learning Refine \\ \hline
 {Tool Hang} &       {\textbf{0.82}}        &     {0.81}       &      {0.73}       &        {0.77}      &  {0.60}\\ \hline
Transport &      0.89        &    \textbf{0.91}       &     0.84       &       0.85      & 0.14\\ \hline
\end{tabular}
\caption{  {Success rate of different $\mathbf{Q}$ designs on tool hang and transport task.}}
\label{tab:matrix_comp}
\end{table*}

\begin{table*}[t]
\centering
 {
\begin{tabular}{|c|c|c|c|c|}
\hline
                     & Base Policy: Diffusion Policy & DiffOG + Base & Static $\mathbf{Q}$ + Base & Matrix Learning + Base \\ \hline
Batch Inference Time & 0.13 s      & 0.28 s & 0.28 s              & 0.27 s          \\ \hline
\end{tabular}
}
\caption{ {We report the batched average inference time across three simulated tasks: lift, can, and square. Each measurement is based on a batch size of 25 and averaged over 50 inference runs. All methods are evaluated using 100 DDPM steps \cite{ho2020denoising}.} }
\label{tab:inf_abl}
\end{table*}

\begin{figure*}[h]
\centering
\begin{overpic}[trim=0 0 0 0,clip, width=0.98\textwidth]{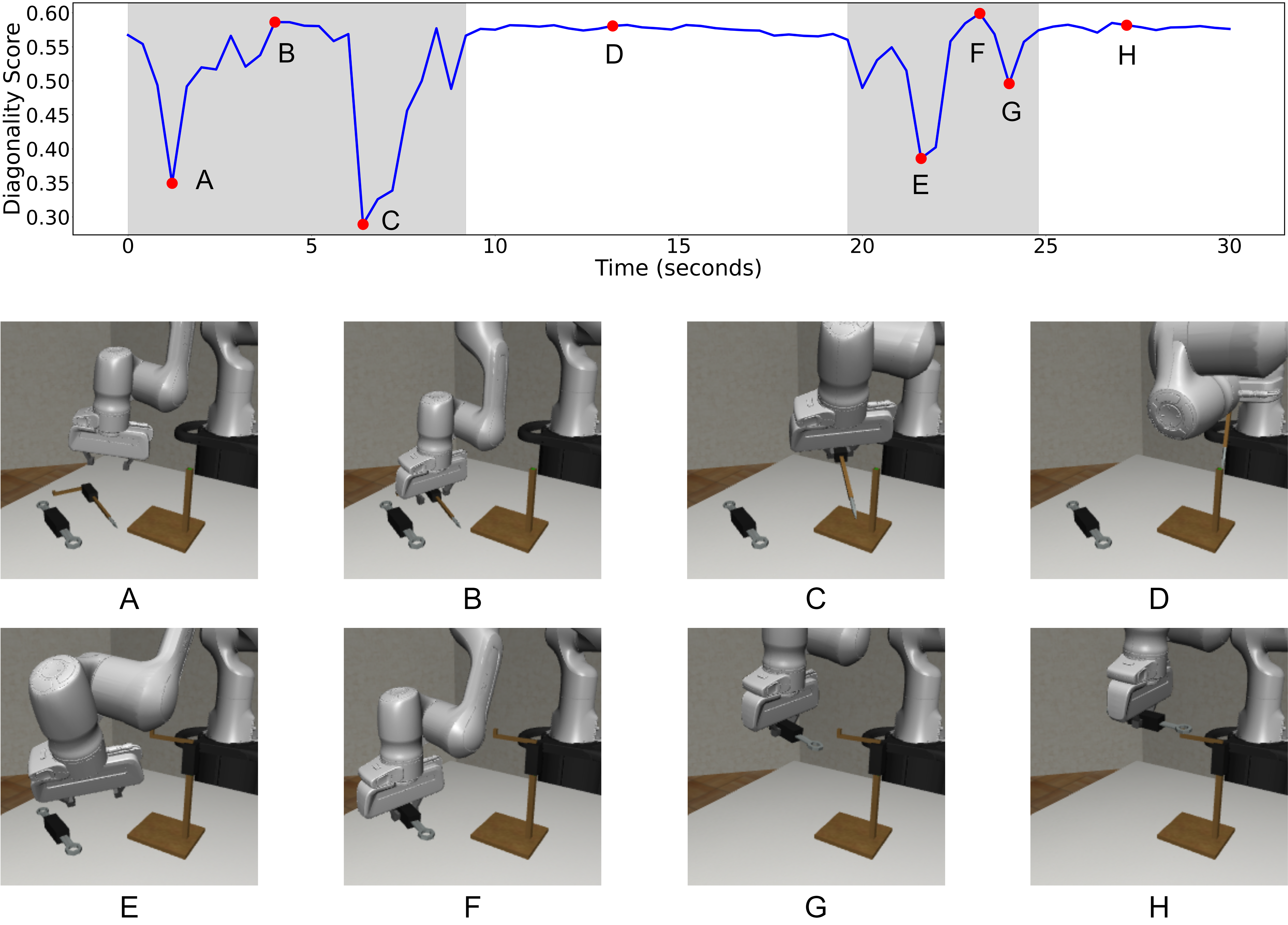}
\end{overpic}
\caption{ {Diagonality score of the learned cost matrix $\mathbf{Q}$ throughout the tool hang process. $\mathbf{Q}$ becomes more diagonal during fine-manipulation phases (B: grasping, D: insertion, F: grasping, H: hanging), and less diagonal during fast transitions, revealing how DiffOG adapts optimization structure to task dynamics.} }
\label{fig:diag}
\end{figure*}

 {Lastly, to demonstrate what the learned transformer and $\mathbf{Q}$ captures, we visualize its structure on the tool hang task. For each forward step, we compute a diagonality score defined as $\sum_{i} Q_{ii}^{2}/(\sum_{i,j} Q_{ij}^{2} + \epsilon)$, where $Q_{ij}$ denotes the $(i,j)$-th entry of the matrix $\mathbf{Q}$, and $\epsilon$ is a small constant (e.g., $1 \times 10^{-4}$ ) added for numerical stability. A larger diagonality score indicates that $\mathbf{Q}$ is more diagonal, while a smaller value suggests stronger off-diagonal coupling.
 Fig.~\ref{fig:diag} shows that $\mathbf{Q}$ becomes more diagonal during fine-manipulation phases (B: grasping, D: insertion, F: grasping, H: hanging), and less diagonal during rapid transition phases. Intuitively, when the policy already outputs actions of low magnitude, a diagonal $\mathbf{Q}$ preserves these fine-grained action characteristics. In contrast, during fast movement phases, off diagonal elements allow the optimizer to capture correlations across different action dimensions and time steps, enabling smoother global trajectories. This example illustrates how DiffOG leverages data-driven structure in $\mathbf{Q}$ to retain task relevant policy behavior when appropriate and to inject coupling when necessary, thereby enhancing generalizability.}

\subsection{Comparison with a Constrained Visuomotor Policy}

We conducted comparison experiments between DiffOG and constrained visuomotor policy Leto \cite{xu2024leto} on 7 tasks. The results are shown in Table~\ref{tab:leto_comp}. The results in the table compare DiffOG and Leto under the same constraint bounds. 

Based on the results in Table~\ref{tab:leto_comp}, we can see that the performance of policies enhanced by DiffOG is superior to Leto policies. This demonstrates the generalization capability of DiffOG across diverse tasks.

\begin{table}[t]
\centering
\begin{tabular}{|c|>{\columncolor{gray!30}}c|>{\columncolor{gray!30}}c|c|}
\hline
          & DiffOG Dataset & DiffOG Refine & Leto \\ \hline
Square    &    0.87         &     \textbf{0.90}          &   0.75   \\
Tool Hang &      \textbf{0.82}          &    0.81           &   0.28   \\
Push-T    &       \textbf{0.83}         &       0.80        &     0.64 \\ 
Pick Place Wall    &       \textbf{0.98}         &       \textbf{0.98}       &     0.63 \\
Shelf Place   &       \textbf{0.73}         &       0.72        &     0.15 \\
Disassemble    &       {0.89}         &       \textbf{0.90}         &     0.55 \\
Stick Pull     &       \textbf{0.70}         &       0.68        &     0.20 \\\hline
\end{tabular}
\caption{ Success rate of benchmarking DiffOG with Leto \cite{xu2024leto}.}
\label{tab:leto_comp}
\end{table}

\subsection{ {Inference Time-varying Constraint Experiment}}

\begin{table*}[h]
\centering
 {
\begin{tabular}{|c|ccccc|cc|}
\hline
       & \multicolumn{5}{c|}{Varying Constraints 0.05--0.3}                                                                                          & \multicolumn{2}{c|}{Static Constraints 0.1} \\ \cline{2-8} 
       & \multicolumn{1}{c|}{P1 Acc.} & \multicolumn{1}{c|}{P2 Acc.} & \multicolumn{1}{c|}{P3 Acc.} & \multicolumn{1}{c|}{P4 Acc.} & SR   & \multicolumn{1}{c|}{Acc.}     & SR      \\ \hline
Can    & 0.05/0.013                   & 0.08/0.022                   & 0.13/0.034                   & 0.22/0.043                   & 0.98 & 0.09/0.026                    & 0.98    \\
Square & 0.05/0.013                   & 0.07/0.018                   & 0.09/0.023                   & 0.17/0.028                   & 0.83 & 0.09/0.020                    & 0.87    \\ \hline
\end{tabular}}
\caption{ {Inference time-varying constraint experiment (Phases P1–P4 with bounds 0.05, 0.1, 0.2, and 0.3, respectively). The policy is trained on a fixed normalized constraint bound of 0.1. During inference, tighter bounds result in smoother motion, reflected by lower acceleration standard deviation and capped maximum acceleration, while preserving a success rate comparable to that of the policy using static constraints 0.1. Acc. represents acceleration and SR represents success rate. Accelerations here refer to maximum/standard deviation.}}
\label{tab:phase}
\end{table*}

\begin{figure}[h]
\centering
\begin{overpic}[trim=0 0 0 0,clip, width=0.48\textwidth]{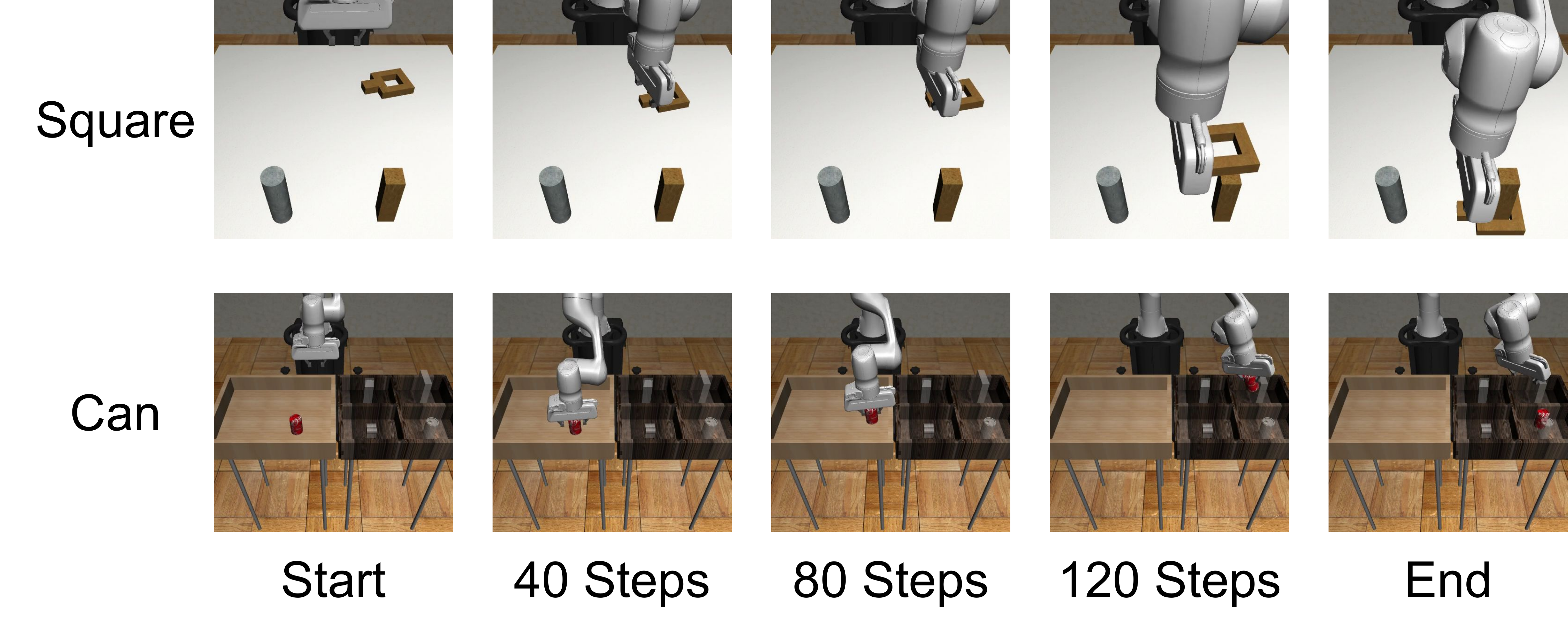}
\end{overpic}
\caption{ {Task phase illustration in the inference time-varying constraint experiment.}}
\label{fig:phase}
\end{figure}

 {In this experiment, the policy is trained on a fixed normalized constraint bound of 0.1 for the can and square tasks. However, during inference, we apply time-varying bounds: 0–40 steps use 0.05 (Phase 1), 40–80 use 0.1 (P2), 80–120 use 0.2 (P3), and 120 to task completion use 0.3 (P4). The task phases used for applying varying constraints during inference are illustrated in Fig.~\ref{fig:phase}. Note that the phases are determined using uniform intervals of 40 steps, without any task-specific selection, as our goal is to evaluate the low-level capability of handling varying constraints. As shown in Table~\ref{tab:phase}, despite being trained only with a static constraint, DiffOG can generalize zero-shot to time-varying constraints at inference time due to its interpretability and generalizability.}

\subsection{ {Residual Policy Baseline with Smoothness Penalty}}

\begin{figure*}[t]
\centering
\begin{overpic}[trim=0 0 0 0,clip, width=0.98\textwidth]{images/penalty.png}
\end{overpic}
\caption{ {Residual policy training with varying smoothness penalty weight $\alpha$.}}
\label{fig:penalty}
\end{figure*}

 {In this section, we design a residual policy training baseline that allows for smoothness regularization. Specifically, given the base action $\mathbf{a}_t$, we train a lightweight MLP to predict a residual action $\Delta \mathbf{a}_t = \mathrm{MLP}(\mathbf{a}_t)$, and define the final action as $\mathbf{y}_t = \mathbf{a}_t + \Delta \mathbf{a}_t$. The training loss is
\[
\mathcal{L}_p = \mathbb{E}_{\mathbf{a}_t \in \mathcal{D}}\left[\| \mathbf{y}_t  - \mathbf{a}_t\|^2 + \frac{\alpha}{2} \left(\frac{d\hat{\mathbf{c}}_t}{dt}\right)^{\!\!\mathrm{T}} \left(\frac{d\hat{\mathbf{c}}_t}{dt}\right)\right],
\]
where the second term is identical to the smoothness penalty used in DiffOG and encourages low velocity/acceleration over the resulting trajectory. The smoother the motion, the smaller this term becomes. Please refer to equations \eqref{eq:dc_y} and \eqref{eq:opt1} for implementation details. We conduct extensive ablation experiments by varying $\alpha$ across a wide range (from 0.01 to 16) and evaluate on three tasks: lift, can, and square. As shown in Fig.~\ref{fig:penalty}, we observe the following: {1)} when $\alpha$ is small (e.g., 0.01), the smoothness penalty is relatively weak, and the residual policy maintains comparable success rates to both DiffOG and the base policy; {2)} under the same setting, the residual reduces max and std acceleration over the base policy, indicating that smoother actions are indeed being generated; {3)} however, the improvement in smoothness still falls significantly short of DiffOG, with noticeably higher max and std acceleration of residual policy baseline; {4)} as $\alpha$ increases, task success rate consistently drops across all three tasks; {5)} although larger $\alpha$ generally leads to smoother motions, the gains quickly saturate, and the resulting trajectories remain less smooth than those produced by DiffOG, even when the success rate degrades substantially (e.g., $\alpha > 4$); and {6)} in some cases, further increasing $\alpha$ even slightly increase the max or std acceleration. These results indicate that DiffOG is a more effective solution, achieving substantial trajectory improvement without sacrificing policy performance.}

 {In addition to its empirical advantages, DiffOG offers a fundamental distinction from the residual policy with smoothness regularization. The residual policy cannot guarantee hard constraint satisfaction at inference time and lacks a structured, interpretable optimization framework. In contrast, DiffOG leverages differentiable trajectory optimization to explicitly enforce hard constraints and systematically refine trajectories. The structure of DiffOG leads to better alignment with task requirements and stronger overall performance, making DiffOG a more principled and generalizable method.}

\section{Limitations and Future Work}

Since this paper focuses on supervised policy learning, both training paradigms of DiffOG, dataset training and refine training, depend on pre-collected demonstrations. This reliance limits DiffOG’s ability to generalize to diverse trajectories beyond those in the dataset. Moreover, if the pre-collected dataset contains extremely jerky trajectories, standard supervised learning methods may struggle to substantially optimize these trajectories while preserving fidelity to the original dataset. A potential future direction involves leveraging reinforcement learning, which could enable finetuning of DiffOG on top of the existing dataset. This strategy would allow the model to incorporate multiple objectives, such as trajectory constraints and smoothness, without requiring extensive reward engineering. By simultaneously fine-tuning both the pre-trained policy and DiffOG itself, reinforcement learning could also facilitate broader exploration of trajectory diversity, ultimately enhancing DiffOG’s overall effectiveness.

 {Moreover, our method has demonstrated strong empirical performance under widely adopted action representations in imitation learning, such as joint angles \cite{zhao2023learning,wu2024gello} and 6D end-effector velocities \cite{mandlekar2022matters, wang2024poco, zhu2023viola}. Beyond these settings, we identify several promising directions for future work: 1) Certain action spaces that are less common in current imitation learning benchmarks, such as joint torques, also conform to the formulations of DiffOG. While not yet widely used, joint torques hold strong potential for force-based imitation learning, as they provide direct control over physical interactions. Extending our method to torque-based action spaces could enable more contact-rich manipulation. 2) Future extensions of the framework may address more complex representations, such as absolute $SO(3)$ rotations. In such cases, the assumption of simple discrete-time derivatives may no longer hold, requiring more expressive formulations of the underlying optimization problem.}

 {In addition, a natural extension of this work is to incorporate collision avoidance constraints into the DiffOG framework. This would require integrating environment and geometry information to formulate differentiable constraints that ensure safe interaction with surrounding objects. While this introduces additional complexity in constraint modeling, it would significantly broaden the applicability of DiffOG to tasks involving cluttered or dynamic environments.}

 {Lastly, DiffOG demonstrates the ability to perform inference with time-varying constraints, even when trained only with static ones. If a high-level semantic reasoning module is available to generate time-varying constraints, it could be seamlessly integrated with the DiffOG formulation to form a hierarchical structure. This would enable more phase-specific optimization tailored to different stages of a task, representing a promising direction for future work.}
\section{Conclusion}

We introduced DiffOG, a learning-based trajectory optimization framework that combines the strengths of neural networks with an interpretable differentiable optimization layer. By integrating trajectory smoothness, adherence to hard constraints, and fidelity to demonstrations into the training objectives, DiffOG enables visuomotor policies to produce smoother and safer action trajectories while maintaining performance. Through extensive evaluations on 11 simulated tasks and 2 real-world tasks, we demonstrated that DiffOG consistently outperforms existing trajectory processing baselines, such as greedy constraint clipping and penalty-based trajectory optimization, as well as constrained visuomotor policy Leto \cite{xu2024leto}. Our results demonstrate the generalization capability of DiffOG across diverse tasks.

\bibliographystyle{plainnat}
\bibliography{references}

\section*{Appendix}

\subsection{Baselines}

In this section, we introduce two trajectory processing baseline methods: constraint clipping and penalty-based trajectory optimization. These baselines apply post-hoc processing to action trajectories generated by policies. Since visuomotor policies lack mechanisms for enforcing constraints and adjusting actions, model-based post-hoc trajectory processing methods are widely adopted in real-world systems.

The goal of both constraint clipping and penalty-based trajectory optimization is to regulate the action outputs of base policies, ensuring adherence to predefined constraints. These methods are formulated within model-based optimization frameworks. Constraint clipping can be viewed as a greedy optimization approach, whereas penalty-based optimization performs global optimization across all time steps by leveraging penalties.

According to equation \eqref{eq:select}, for the sake of simplicity, we only include the action dimensions to be considered for trajectory optimization in the formulas presented in this section. For action dimensions that do not require trajectory optimization, such as grasping actions, the corresponding action dimensions from the base policy output can be retained. These can then be concatenated with the optimized trajectory actions after the trajectory optimization process. In this case, the input of the optimization problem becomes
\begin{align*}
{\mathbf{c}}_t = \mathbf{S}{\mathbf{a}}_t = \left[
{c}_t^{\mathrm{T}},{c}_{t+1}^{\mathrm{T}},\ldots,{c}_{t+T_p-1}^{\mathrm{T}}
\right]^{\mathrm{T}} \in \mathbb{R}^{T_pD_c},
\end{align*}
where $\mathbf{S}$ is for  {selecting the action dimensions} to be considered for trajectory optimization. Similarly, we define an output variable
\begin{align*}
{\mathbf{\hat{c}}}_t  = \left[
\hat{c}_t^{\mathrm{T}},\hat{c}_{t+1}^{\mathrm{T}},\ldots,\hat{c}_{t+T_p-1}^{\mathrm{T}}
\right]^{\mathrm{T}} = \mathbf{S}{\mathbf{y}}_t \in \mathbb{R}^{T_pD_c},
\end{align*}
and we denote ${\mathbf{\hat{c}}}_t^{\star}$ as the optimized output variable. For continuous trajectory optimization, like what we do in Section~\ref{sec:inference}, a previous action $\hat{c}_{t-1}$ from last prediction is also needed as input.

Based on equation \eqref{eq:vel_con}, the optimization is subject to constraints defined by two vectors, ${d}_{\text{min}} \Delta t$ and ${d}_{\text{max}} \Delta t$, which specify the minimum and maximum allowable bounds for the change in each dimension of the action vector. 

\subsubsection{Greedy Constraint Clipping}
The goal of constraint clipping is to compute an optimized sequence of  actions $\hat{\mathbf{c}}_t$ that satisfies the given constraints by greedy strategy. 

For the initial time step, the optimized action is initialized as 
$\hat{c}^{\star}_{t-1} =\hat{c}_{t-1}.$
Then, for each subsequent time step $t+k, k = 0 \ldots T_p-1$, the difference between the input action $c_{t+k}$ and the optimized action from the previous time step $\hat{c}_{t+k-1}^{\star}$ is calculated as $\Delta c_{t+k} = c_{t+k} - \hat{c}_{t+k-1}^{\star}$. This difference $\Delta c_{t+k}$ is then clipped element-wise to ensure it lies within the bounds ${d}_{\text{min}}\Delta t \leq \Delta c_{t+k} \leq {d}_{\text{max}} \Delta t$, resulting in 
$$\Delta c_{t+k}^{\text{clip}} = \text{clip}(\Delta c_{t+k}, {d}_{\text{min}}\Delta t, {d}_{\text{max}}\Delta t).$$
The optimized action at time $t+k$ is then updated as 
$$\hat{c}_{t+k}^{\star} = \hat{c}_{t+k-1}^{\star} + \Delta c_{t+k}^{\text{clip}}.$$

Through this process, the optimized trajectory ${\mathbf{\hat{c}}}_t^{\star}$ satisfies the constraints at each time step while remaining as close as possible to the input trajectory ${\mathbf{c}}_t $ by greedy optimization.

\subsubsection{Penalty-Based Trajectory Optimization}

The goal of penalty-based optimization is to compute an optimized sequence of actions $\hat{\mathbf{c}}_t$ that satisfies the given constraints by leveraging penalties to perform global optimization across all time steps. Unlike constraint clipping, which operates greedily on each time step, penalty-based optimization considers all time steps jointly.

Denote 
$
{\mathbf{c}}_{t-1} = \left[\hat{c}_{t-1}^{\mathrm{T}},
{c}_t^{\mathrm{T}},{c}_{t+1}^{\mathrm{T}},\ldots,{c}_{t+T_p-2}^{\mathrm{T}}
\right]^{\mathrm{T}} \in \mathbb{R}^{T_pD_c}.
$
Then define the difference sequence $\Delta\mathbf{c}_{t} = {\mathbf{c}}_{t} - {\mathbf{c}}_{t-1}.$ At each iteration, the violations of the constraints are computed as:
\[
\mathbf{v}_{\text{max}} = \max(0, \Delta\mathbf{c}_{t} - \mathbf{d}_{\text{max}}\Delta t),
\]
\[
\mathbf{v}_{\text{min}} = \max(0, \mathbf{d}_{\text{min}}\Delta t - \Delta\mathbf{c}_{t}),
\]
where $\mathbf{d}_{\text{max}} = \left[
{d}_{\text{max}}^{\mathrm{T}},{d}_{\text{max}}^{\mathrm{T}},\ldots,{d}_{\text{max}}^{\mathrm{T}}
\right]^{\mathrm{T}} \in \mathbb{R}^{T_pD_c}$ is a sequence of upper constraint bounds, and we can also define the sequence of lower constraint bounds $\mathbf{d}_{\text{min}}$ following this format. Here, the max and min operations are applied element-wise to the vector, meaning that each element of the vector is individually processed through the max or min operation. The penalty for the global optimization is then expressed as:
\[
\nabla = \mathbf{v}_{\text{max}} - \mathbf{v}_{\text{min}}.
\]
We can initialize ${\mathbf{\hat{c}}}_t = {\mathbf{{c}}}_t$ for warm start.
The optimized trajectory ${\mathbf{\hat{c}}}_t$ is updated using penalty:
\[
{\mathbf{\hat{c}}}_t \leftarrow {\mathbf{\hat{c}}}_t- \eta \nabla,
\]
where $\eta$ is the step size. This process is repeated for a fixed number of iterations or until the maximum violation across all time steps falls below a given tolerance $\epsilon^\mathbf{v}$, denoted as
$
\|\mathbf{v}_{\text{max}}\| + \|\mathbf{v}_{\text{min}}\| < \epsilon^\mathbf{v}.
$
Then we can get the optimized trajectory ${\mathbf{\hat{c}}}_t^{\star}$.
Through this approach,  ${\mathbf{\hat{c}}}_t^{\star}$ satisfies the constraints while remaining as close as possible to the original input trajectory $ {\mathbf{{c}}}_t$ by penalty-based optimization.

\subsection{Action Space Assumption}
\begin{assumption}
The proposed method assumes that the discrete-time derivative $\frac{d{{c}}_t}{dt}$ in \eqref{eq:dc} represents explicit, element‑wise accelerations or velocities.
\label{assu:action}
\end{assumption}

The purpose of introducing Assumption \ref{assu:action} is to justify the addition of constraints in subsequent sections. The fundamental question addressed here is whether imposing constraints on $\frac{d{{c}}_t}{dt}$ can physically restrict the actions generated by the policy. When Assumption \ref{assu:action} holds, the answer is straightforward: the constraints directly regulate explicit physical quantities such as velocities or accelerations. However, even when Assumption \ref{assu:action} does not strictly hold, certain forms of constraints may still provide meaningful regulation over the robot’s motion. We elaborate on these cases in the following discussion.

Among commonly used action spaces for robot policies, Assumption \ref{assu:action} holds under the following conditions:

1. The action space consists of joint angles, where the derivative corresponds to joint velocities. This type of action space is commonly used in systems that collect data through leader-follower arms, such as ALOHA \cite{zhao2023learning} and GELLO \cite{wu2024gello}.

2. The action space consists of the robot end-effector linear position in $xyz$, where the derivative corresponds to linear velocity. This action space is used in the push-T task in \cite{chi2023diffusionpolicy} and the Meta-World benchmark \cite{yu2020meta}.

Another widely adopted action space is 
$
[v_x,\, v_y,\, v_z,\,
\dot r_x,\, \dot r_y,\, \dot r_z]^{\mathrm T}
\in \mathbb R^{6},
$
where $\mathbf v_{xyz}=[v_x,v_y,v_z]^{\mathrm T}$ is translational velocity, and  $\dot{\mathbf r}=[\dot r_x,\dot r_y,\dot r_z]^{\mathrm T}$ is the derivative of the axis-angle rotation vector~\cite{mandlekar2022matters, wang2024poco, zhu2023viola, xu2024leto}. Its time derivative,
$
[
\dot v_x,\, \dot v_y,\, \dot v_z,\,
\ddot r_x,\, \ddot r_y,\, \ddot r_z
]^{\!\mathrm{T}},
$
includes linear accelerations and axis-angle second derivatives, which differ from the true angular acceleration $\dot{\boldsymbol\omega}$. 

Although $\ddot{\mathbf{r}}$ does not directly represent $\dot{\boldsymbol{\omega}}$, constraining $\ddot{\mathbf{r}}$ effectively regulates $\dot{\boldsymbol{\omega}}$. The angular acceleration is related by $\dot{\boldsymbol{\omega}} = \mathbf{J}(\mathbf{r}) \ddot{\mathbf{r}} + \dot{\mathbf{J}}(\mathbf{r}) \dot{\mathbf{r}}$, where $\mathbf{J}(\mathbf{r})$ is the Jacobian matrix mapping axis-angle velocity to angular velocity. In practice, $\mathbf{J}(\mathbf{r})$ typically remains bounded, as evidenced by empirical successes of such action space in imitation learning frameworks~\cite{mandlekar2022matters, wang2024poco, zhu2023viola}, which achieve stable rotations even without explicit constraints. Different from these prior work, DiffOG explicitly and effectively constrains rotational motion, resulting in smoother and more controlled behaviors. Extensive experimental results demonstrate the effectiveness of DiffOG.

\subsection{Task Descriptions and Experimental Setup}\label{sec:task}

For the end-effector 6-DOF velocity action space, we include five tasks: lift, can, square, tool hang, and transport, all sourced from Robomimic benchmark \cite{mandlekar2022matters}. For the end-effector linear position action space, we include six tasks: push-T from \cite{chi2023diffusionpolicy}, along with pick place wall, shelf place, disassemble, stick push, and stick pull from Meta-World \cite{yu2020meta}. Notably, we selected all the ``very hard" tasks identified in Meta-World as marked in \cite{ze20243d}. Lastly, for the joint angle action space, we include two real-world tasks, arrange desk and move the stack, with data collection and policy experiments conducted using ALOHA \cite{zhao2023learning}.

\begin{table*}[t]
\centering
\begin{tabular}{|c|c|c|c|c|c|c|c|}
\hline
\textbf{Task}            & \textbf{Bound} & \textbf{Unnormalized Bound} & \textbf{Demos} & \textbf{Action Space}                 & \bm{$\Delta t$} & \bm{$\alpha$} & \textbf{High Dim. Obs.}\\ \hline
Lift            & 0.1    & 0.1                                               & 200   & 6-DOF Vel. + 1-DOF Grasping  &           0.05    s         & 4   & $2\times3\times84\times84$  \\ \hline
Can             & 0.1    & 0.1                                               & 200   & 6-DOF Vel. + 1-DOF Grasping  &            0.05     s        & 4  &$2\times3\times84\times84$     \\ \hline
Square          & 0.1    & 0.1                                               & 200   & 6-DOF Vel. + 1-DOF Grasping  &            0.05    s        & 4    &$2\times3\times84\times84$    \\ \hline
Tool Hang       & 0.1    & 0.1                                               & 200   & 6-DOF Vel. + 1-DOF Grasping  &             0.05    s        & 4   &$2\times3\times240\times240$    \\ \hline
Transport       & 0.1    & 0.1                                               & 200   & 12-DOF Vel. + 2-DOF Grasping &             0.05   s        & 4     &$4\times3\times84\times84$  \\ \hline
Push-T          & 0.05   & {[}12.5,12.2{]}  Pixels    & 90    & 2-DOF Position               &             0.1  s          & 1    & $1\times3\times96\times96$  \\ \hline
Pick Place Wall & 0.6    & {[}0.53, 0.92, 3.75{]}  mm & 30    & 3-DOF Pos. + 1-DOF Grasping  &           0.0125 s               & 4  & $3\times512$    \\ \hline
Shelf Place     & 0.6    & {[}2.32, 1.47, 4.13{]}  mm & 30    & 3-DOF Pos. + 1-DOF Grasping  &           0.0125 s              & 4   & $3\times512$    \\ \hline
Disassemble     & 0.6    & {[}0.65, 1.10, 0.61{]}  mm & 30    & 3-DOF Pos. + 1-DOF Grasping  &           0.0125 s               & 4   & $3\times512$      \\ \hline
Stick Push      & 0.6    & {[}1.03, 0.15, 0.68{]}  mm & 30    & 3-DOF Pos. + 1-DOF Grasping  &           0.0125 s              & 4    & $3\times512$  \\ \hline
Stick Pull      & 0.6    & {[}4.46, 1.44, 2.49{]}  mm & 30    & 3-DOF Pos. + 1-DOF Grasping  &           0.0125 s               & 4    & $3\times512$   \\ \hline
Arrange Desk    & 0.1    &              \parbox[c]{2.8cm}{\centering \vspace{3pt}[0.070, 0.071, 0.102,\\ 0.030, 0.056, 0.148,\\ 0.078, 0.089, 0.105,\\ 0.039, 0.069, 0.120] rad\vspace{3pt} } & 100   & 12-DOF Angle + 2-DOF Grasping &           0.25    s          & 4      & $3\times3\times240\times320$   \\ \hline
Move the Stack  & 0.05   &      \parbox[c]{2.8cm}{\centering \vspace{3pt}[0.049, 0.051, 0.057,\\ 0.045, 0.038, 0.069] rad\vspace{2pt}}         & 100   & 6-DOF Angle + 1-DOF Grasping  &              $\frac{1}{6}$     s      & 4    & $2\times3\times240\times320$    \\ \hline
\end{tabular}
\caption{Summary of hyperparameters for trajectory optimization and dataset. $\alpha$ is the smoothing weight, $\frac{1}{\Delta t}$ represents the frequency of policy inference, and ``Demos” indicates the number of demonstration episodes. ``High Dim. Obs" refers to the dimensionality of high-dimensional observations. For instance, $2\times3\times84\times84$ corresponds to RGB observations from two cameras, each with a shape of  $3\times84\times84$, while $3\times512$ represents point cloud data with a shape of $3\times512$. ``Bound" refers to $d_{\text{max}} \Delta t $ in equation \eqref{eq:vel_con}. ``Unnormalized Bound" refers to constraint bounds unnormalized to dataset scale.}
\label{tab:dataset}
\end{table*}

The goal of the arrange desk task is to organize two cups and a bowl placed on the table. This is a dual-arm, long-horizon manipulation task. As shown in Fig.~\ref{fig:bowl_process}, the two arms first individually grasp a bowl and a cup, stacking them together. Next, one arm moves the stack to the location of the coaster. Finally, the other arm picks up the remaining cup and stacks it on top of the stack already placed on the coaster. 

The goal of the move the stack task is to pick up a glass cup with a spoon placed on top of it and move it to the target coaster, as shown in Fig.~\ref{fig:glass_process}. Since the spoon is balanced on the glass cup, the task requires smooth robotic motions; otherwise, the spoon may fall off. Therefore, this task cannot tolerate excessively jerky motions.

\begin{figure}[t]
\centering
\begin{overpic}[trim=0 0 0 0,clip, width=0.5\textwidth]{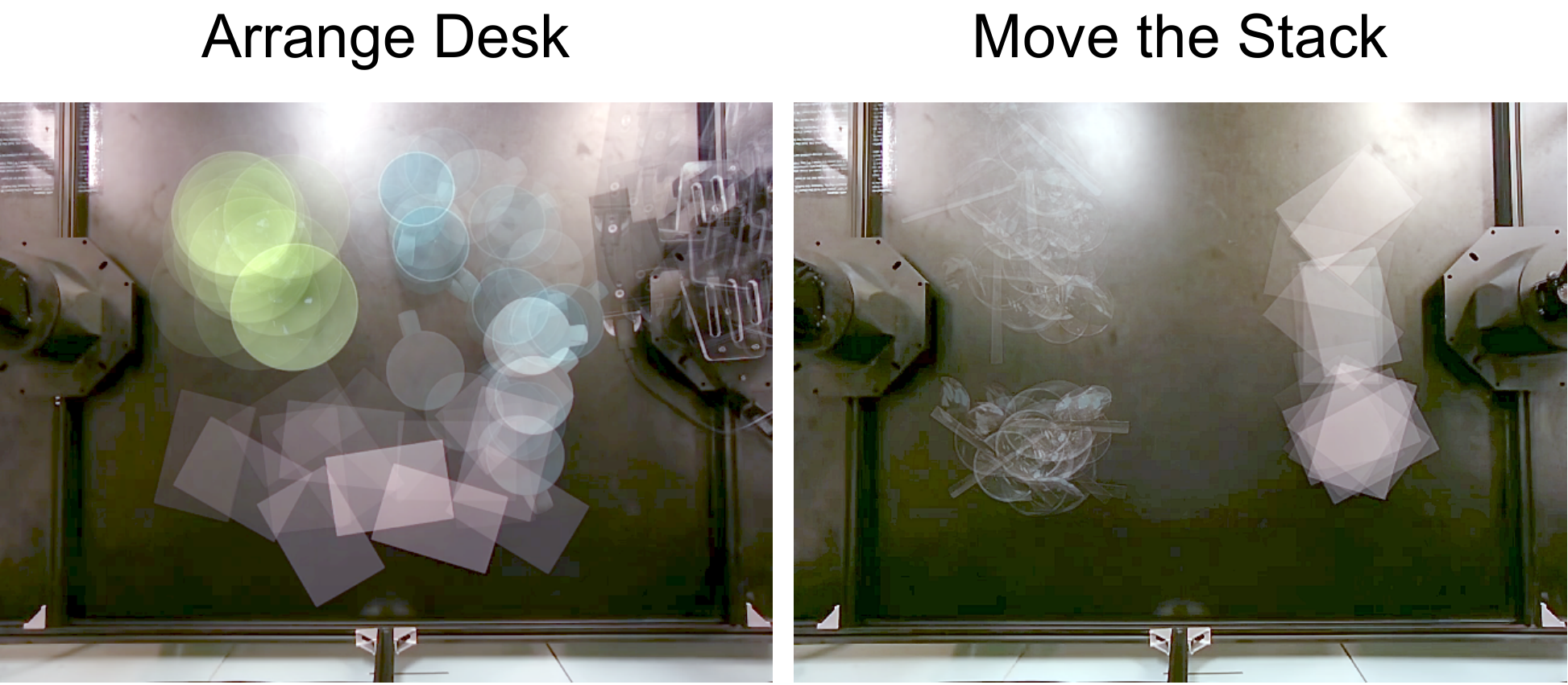}
\end{overpic}
\caption{ {Overlay of all 15 DiffOG trial layouts.}}
\label{fig:random}
\end{figure}

For real-world tasks, we randomized the initial object placements within a spatial range for each trial. An overlay visualization of all 15 DiffOG trials is shown in Fig.~\ref{fig:random}.

\subsection{ {Evaluating DiffOG under Multiple Motion Constraints}}

\begin{table*}[t]
\centering
 {
\begin{tabular}{|c|c|ccc|c|}
\hline
Constraint Type  & No Constraint & \multicolumn{3}{c|}{Acc.}   & Acc. + Vel. \\ \hline
                 & Base Policy  & Clipping & Penalty-Based & DiffOG  & DiffOG      \\ \hline
Success Rate     & \textbf{0.91}         & 0.78 & 0.76 & 0.87     & 0.87        \\ \cline{1-1}
Max Velocity     & 0.72         & 0.63 & 0.66 & 0.65    & \textbf{0.39}        \\ \cline{1-1}
Max Acceleration & 0.38         & \textbf{0.09}   & \textbf{0.09}   & \textbf{0.09}     & \textbf{0.09}        \\ \hline
\end{tabular}}
\caption{ {Performance on the square task under different constraint settings. DiffOG with acceleration and velocity constraints outperforms baselines that apply only acceleration constraints, highlighting its flexibility in handling multiple motion constraints.}}
\label{tab:acc_vel}
\end{table*}

 {To further demonstrate the flexibility of DiffOG in handling diverse motion-related constraints, we conducted an experiment on the square task involving multiple motion constraints, as shown in Table~\ref{tab:acc_vel}. In this new experiment, we additionally incorporated an explicit velocity constraint (with a normalized bound of 0.5) into optimization problem \eqref{eq:opt1}, thereby enforcing both velocity and acceleration limits during training and inference.  As shown in Table~\ref{tab:acc_vel}, while the addition of a velocity constraint makes the feasible solution space strictly smaller and increases the difficulty of the optimization problem, DiffOG still achieves higher success rates than baseline methods that are only subject to the acceleration constraint. This highlights DiffOG’s flexibility in integrating multiple motion constraints and its effectiveness in producing trajectories that are both well-constrained and task-successful.}

\subsection{Hyperparameters and Dataset}\label{sec:info}

The summary of hyperparameters for trajectory optimization and dataset as shown in Table~\ref{tab:dataset}.
In the experiments of this paper, we consistently adopt symmetric constraints which is $d_{\text{max}} = -d_{\text{min}}$. Since the actions require normalization to [-1,1] during training \cite{chi2023diffusionpolicy}, all constraints are applied on the normalized action scale for DiffOG. For simplicity, when applying constraints on the normalized action scale, the same value is used for each action dimension. Therefore, the ``Bound" column contains only a single value for each task. Since the distribution of action values differs across task datasets, the bounds for each action dimension may vary after unnormalization. For Robomimic tasks \cite{mandlekar2022matters}, the action values in the dataset are already normalized to the range of $[-1, 1]$. Therefore, the bound and the unnormalized bound are identical. 

We can observe that, except for Robomimic tasks, the physical values of the constraint bounds can be computed by dividing the ``Unnormalized Bounds" column by the $\Delta t$ column. For instance, in real tasks such as move the stack and arrange desk, this calculation yields the upper and lower limits of joint angular velocities (measured in rad/s).

The bound of 0.1 for Robomimic tasks represents normalized acceleration. To convert the it back to the raw form, use a scalar of 20 $\text{m}/\text{s}^2$ for linear acceleration and 200 $\text{rad}/\text{s}^2$ for second derivative of axis-angle \cite{mandlekar2022matters}. For example, the maximum linear acceleration is  $0.1 \times 20~\text{m}/\text{s}^2 = 2~\text{m}/\text{s}^2$.

Since constraint clipping and penalty-based trajectory optimization do not require training, constraints can be directly applied to unnormalized actions. These constraints are equivalent to those imposed by DiffOG but are applied directly as unnormalized constraints. The parameters for penalty-based trajectory optimization are listed in Table~\ref{tab:gra}. As mentioned earlier, the actions provided by the Robomimic dataset are already normalized to the range of $[-1, 1]$. The constraint bound we impose in this case is $0.1$. Based on this, the violation tolerance for penalty-based optimization is set to a relatively small value of $0.004$. For other tasks, the constraints are applied to raw action values, and the violation tolerances are determined based on the scale of unnormalized bounds provided in Table~\ref{tab:dataset}.

Finally, we provide the hyperparameters of the transformer used in DiffOG, as shown in Table~\ref{tab:transformer}.

\begin{table}[]
\begin{tabular}{|c|c|c|c|}
\hline
\textbf{Task}             & \textbf{Step Size} \bm{$\eta$} & \textbf{Tolerance} \bm{$\epsilon^\mathbf{v}$} & \textbf{Max Iterations} \\ \hline
Robomimic Tasks  & 0.03             & 0.004                           & 10000          \\ \hline
Push-T           & 0.1              & 0.1                             & 1000           \\ \hline
Meta-World Tasks & 0.1              & 0.05                            & 1000           \\ \hline
Real ALOHA tasks & 0.05             & 0.01                            & 1000           \\ \hline
\end{tabular}
\caption{Parameters of penalty-based trajectory optimization.}
\label{tab:gra}
\end{table}

\begin{table}[t]
\centering
\begin{tabular}{|c|c|c|c|c|}
\hline
\textbf{Dropout} & \textbf{Heads} & \textbf{Layers} & \textbf{Embed. Dim.} & \textbf{Feed. Dim.} \\ \hline
0.1     & 4          & 2           & 256         & 256              \\ \hline
\end{tabular}
\caption{Hyperparameters of transformer encoder.}
\label{tab:transformer}
\end{table}

\subsection{Computation of Trajectory Metrics}

In this section, we introduce details of computing trajectory metrics shown in Figs.~\ref{fig:traj} and \ref{fig:traj2}. 

The trajectory metrics shown in  Figs.~\ref{fig:traj} and \ref{fig:traj2} are calculated as the average over the final 10 checkpoints, with each checkpoint evaluated across 50 different environment initializations. The results are further averaged over 3 training seeds, totaling 150 environment evaluations.

For position control, we present the velocity metric; for velocity control, we present the acceleration metric. For joint angle control, we provide not only the velocity metric (first derivative) but also the acceleration metric (second derivative). For a single evaluation in a specific environment, the calculation of ``max" involves taking the maximum value of each action dimension across all time steps and then averaging these maximum values across all action dimensions. The calculation of ``standard deviation" involves computing the standard deviation of the metric for each action dimension over time and then averaging these standard deviations across all action dimensions. This calculation of the standard deviation reflects the degree of jerkiness in the trajectory over time steps.

The acceleration metrics for Robomimic tasks are normalized values, and are averaged on linear accelerations and second derivatives of axis-angle. Please refer to Section~\ref{sec:info} to see more information of unnormalizing them to their raw form, which are measured in $\text{m}/\text{s}^2$ and $\text{rad}/\text{s}^2$.

\end{document}